\newcommand{\bv}{\mathbf{v}}
\newcommand{\N}{\mathbb{N}}
\newcommand{\eps}{\varepsilon}
\newcommand{\beps}{\bm{\varepsilon}}
\newcommand{\argmin}{\mathop{\rm arg\min}}
\newcommand{\R}{\mathbb{R}}
\newcommand{\ca}{\mathcal }
\newcommand{\tn}{\, \textnormal}
\newcommand{\beq}{\begin{equation}\label }
\newcommand{\eeq}{\end{equation} }
\newcommand{\bal}{\begin{align*} }
\newcommand{\ve}{\varepsilon}
\newcommand{\bs}{\boldsymbol}
\newcommand{\eal}{\end{align*} }
\newtheorem{thm}{Theorem}[section]
\newtheorem{lem}[thm]{Lemma}
\newtheorem{defi}[thm]{Definition}
\newcommand{\E}{\mathbb{E}}
\newcommand{\bbeta}{{\boldsymbol{\beta}}}
\newcommand{\blambda}{{\boldsymbol{\lambda}}}
\newcommand{\bX}{\mathbf{X}}
\newcommand{\Fm}{\ca F_{\textsc{\tiny MARS}}}
\newcommand{\Fs}{\ca F_{\textsc{\tiny  FS}}}
\newcommand{\Mult}{\operatorname{Mult}}
\newcommand{\footnoteremember}[2]{
  \footnote{#2}
  \newcounter{#1}
  \setcounter{#1}{\value{footnote}}
}
\newcommand{\footnoterecall}[1]{
  \footnotemark[\value{#1}]
}
\title{abc}
\begin{document}

\title{A comparison of deep networks with ReLU activation function and linear spline-type methods}

\author{Konstantin Eckle\footnoteremember{b}{Leiden University, Mathematical Institute, Niels Bohrweg 1, 2333 CA Leiden,
		The Netherlands, k.j.eckle@math.leidenuniv.nl; schmidthieberaj@math.leidenuniv.nl 
 }\and   Johannes Schmidt-Hieber\footnoterecall{b}}

\date{}
\maketitle

\begin{abstract}
Deep neural networks (DNNs) generate much richer function spaces than shallow networks. Since the function spaces induced by shallow networks have several approximation theoretic drawbacks, this explains, however, not necessarily the success of deep networks. In this article we take another route by comparing the expressive power of DNNs with ReLU activation function to linear spline methods. We show that MARS (multivariate adaptive regression splines) is improper learnable by DNNs in the sense that for any given function that can be expressed as a function in MARS with $M$ parameters there exists a multilayer neural network with $O(M \log (M/\eps))$ parameters that approximates this function up to sup-norm error $\eps.$ We show a similar result for expansions with respect to the Faber-Schauder system. Based on this, we derive risk comparison inequalities that bound the statistical risk of fitting a neural network by the statistical risk of spline-based methods. This shows that deep networks perform better or only slightly worse than the considered spline methods. We provide a constructive proof for the function approximations.
\end{abstract}

 \paragraph{AMS 2010 Subject Classification:}
 Primary 62G08; secondary 62G20.
%
%
\paragraph{Keywords:} Deep neural networks; nonparametric regression;  splines; MARS; Faber-Schauder system; rates of convergence.

\newpage

\section{Introduction}

Training of DNNs is now state of the art for many different complex tasks including detection of objects on images, speech recognition and game playing. For these modern applications, the ReLU activation function is standard. One of the distinctive features of a multilayer neural network with ReLU activation function (or ReLU network) is that the output is always a piecewise linear function of the input. But there are also other well-known nonparametric estimation techniques that are based on function classes built from piecewise linear functions. This raises the question in which sense these methods are comparable. 

We consider methods that consist of a function space $\mathcal{F}$ and an algorithm/learner/ estimator that selects a function in $\mathcal{F}$ given a dataset. In the case of ReLU networks, $\mathcal{F}$ is given by all network functions with fixed network architecture and the chosen algorithm is typically a version of stochastic gradient descent. A method can perform badly if the function class $\mathcal{F}$ is not rich enough or if the algorithm selects a bad element in the class. The class of candidate functions $\mathcal{F}$ is often referred to as expressiveness of a method.

Deep networks have a much higher expressiveness than shallow networks, cf. \cite{Mhaskar1993, asssd, 123423, Prodmult, Liang2016, Poggio2017}. Shallow ReLU networks have, however, several known drawbacks. It requires many nodes/units to localize and to approximate for instance a function supported on a smaller hypercube, cf. \cite{MR1399382}. A large number of parameters is moreover necessary to approximately multiply inputs. While the literature has mainly focused on the comparison deep versus shallow networks, it is necessary to compare deep networks also with methods that have a similar structure but do not suffer from the shortcomings of shallow networks. 

There is a vast amount of literature on spline based approximation, cf. \cite{chui1988multivariate}, \cite{chui2}, \cite{de1993box}, \cite{micchelli1995mathematical}. For a direct comparison with ReLU networks it is natural to focus on spaces spanned by a piecewise linear function system. In this article we compare deep ReLU networks to well-known spline methods based on classes of piecewise linear functions. More specifically, we consider multivariate adaptive regression splines (MARS) \cite{MR1091842} and series expansion with respect to the Faber-Schauder system \cite{Faber1910, vanderMeulen2017}.

MARS is a widely applied method in statistical learning \cite{Hastie2009}. Formally, MARS refers to the algorithm describing the selection of a function from a specific class of piecewise linear candidate functions called the MARS function class.
Similarly as multilayer neural networks, the MARS function class is build from a system of simple functions and has width and depth like parameters. In contrast to DNNs, MARS also uses tensorization to generate multivariate functions. The Faber-Schauder functions are the integrals of the Haar wavelet functions and are hence piecewise linear. Series expansion with respect to the Faber-Schauder system results therefore in a piecewise linear approximation. Both MARS and Faber-Schauder class satisfy an universal approximation theorem, see Section \ref{sec.FS} for a discussion. 

We show that for any given function that can be expressed as a function in the MARS function class or the Faber-Schauder class with $M$ parameters there exists a multilayer neural network with $O(M \log (M/\eps))$ parameters that approximates this function up to sup-norm error $\eps.$ We also show that the opposite is false. There are functions that can be approximated up to error $\eps$ with $O(\log(1/\eps))$  network parameters  but require at least of order $\eps^{-1/2}$ many parameters if approximated by MARS or the Faber-Schauder class.

The result provides a simple route to establish approximation properties of deep networks. For the Faber-Schauder class, approximation rates can often be deduced in a straightforward way and these rates then carry over immediately to approximations by deep networks.  As another application, we prove that for nonparametric regression, fitting a DNN will never have a much larger risk than MARS or series expansion with respect to the Faber-Schauder class. 
Although we restrict ourselves to two specific spline methods, many of the results will carry over to other systems of multivariate splines and expansions with respect to tensor bases.

The proof of the approximation of MARS and Faber-Schauder functions by multilayer neural network is constructive. This allows to run a two-step procedure which in a first step estimates/learns a MARS or Faber-Schauder function. This is then used to initialize the DNN. For more details see Section \ref{sec31}. 

Related to our work, there are few results in the literature showing that the class of functions generated by multilayer neural networks can be embedded into a larger space. \cite{Zhang2016} proves that neural network functions are also learnable by a carefully designed kernel method. Unfortunately, the analysis relies on a power series expansion of the activation function and does not hold for the ReLU. In \cite{zhang2017}, a convex relaxation of convolutional neural networks is proposed and a risk bound is derived.
 
This paper is organized as follows. Section \ref{Sec2} provides short introductions to the construction of MARS functions and the Faber-Schauder class. Notation and basic results for DNNs are summarized in Section \ref{Sec3}. Section \ref{sec.comparison} contains the main results on approximation of MARS and Faber-Schauder functions by DNNs. In Section \ref{risk}, we derive risk comparison inequalities for nonparametric regression. Numerical simulations can be found in Section \ref{FSR}. Additional proofs are deferred to the end of the article.

{\it Notation:} Throughout the paper, we consider functions on the hypercube $[0,1]^d$ and $\|\cdot\|_\infty$ refers to the supremum norm on $[0,1]^d.$ Vectors are displayed by bold letters, e.g. $\bs{X}, \bs x.$

\section{Spline type methods for function estimation}\label{Sec2}

In this section, we describe MARS and series expansions with respect to the Faber-Schauder system. For both methods we discuss the underlying function space and algorithms that do the function fitting. The construction of the function class is in both cases very similar. We first identify a smaller set of functions to which we refer as function system. The function class is then defined as the span of this function system.

\subsection{MARS - Multivariate Adaptive Regression Splines}
Consider functions of the form
\begin{align}
	h_{I, \bs t}(x_1, \ldots, x_d) = \prod_{j \in I} \big(s_j(x_j-t_j)\big)_+, \quad \bs x=(x_1,\hdots,x_d)^\top\in[0,1]^d,
	\label{eq.MARS_bf}
\end{align}
with $ I \subseteq \{1, \ldots, d\}$ an index set,  $s_j \in \{-1,1\}$ and $\bs t = (t_j)_{j \in I}\in [0,1]^{|I|}$ a shift vector. These are piecewise linear functions in each argument whose supports are hyperrectangles. Suppose we want to find a linear combination of functions $h_{I, \bs t}$ that best explains the data. To this end, we would ideally minimize a given loss over all possible linear combinations of at most $M$ of these functions, where $M$ is a predetermined cut-off parameter. Differently speaking, we search for an element in the following function class.

\begin{defi}[MARS function class]\label{2}
For any positive integer $M$ and any constants $F, C>0,$ define the family of functions
 \begin{align*}
	\Fm(M, C)=\Big\{f=\beta_0+\sum_{m=1}^{M}\beta_m h_m \, : \, h_m \text{ is of the form} \ \eqref{eq.MARS_bf}, |\beta_m|\leq C, \|f\|_\infty \leq F\Big\}.
\end{align*}
All functions are defined on on $[0,1]^d.$ For notational convenience, the dependence on the constant $F$ is omitted. 
\end{defi}


For regression, the least squares loss is the natural choice. Least-squares minimization over the whole function space is, however, computationally intractable. Instead, MARS - proposed by \cite{MR1091842} - is a greedy algorithm that searches for a set of basis functions $h_{I, \bs t}$ that explain the data well. Starting with the constant function, in each step of the procedure two basis functions are added. The new basis functions are products of an existing basis function with a new factor $(x_j-t_j)_+,$ where $t_j$ coincides with the $j$-th component of one of the design vectors. Among all possible choices, the basis functions that lead to the best least-squares fit are selected. The algorithm is stopped after a prescribed number of iterations. It can be appended by a backwards deletion procedure that iteratively deletes a fixed number of selected basis functions with small predictive power. Algorithm \ref{alg.MARS} provides additional details.

\begin{algorithm}
 \DontPrintSemicolon
  \SetAlgoLined
  \SetKwInOut{Input}{Input}\SetKwInOut{Output}{Output}
  \Input{Integer $M'$ and sample $(\bs X_i, Y_i),$ $i=1, \ldots,n$}
  \Output{$2M'+1$ basis functions of the form \eqref{eq.MARS_bf}}
   $\ca S \leftarrow  \{1\}$ \;
  \For{$k$ from $1$ to $M'$}{
   \For{$g \in \ca S$}{
   		$J(g) \leftarrow \{j\, : \, x_j \mapsto g(x_1, \ldots, x_d) \ \text{is constant}\}$\; 
   }
   $(g^*, j^*, t_{j^*}^*) \leftarrow$  argmin over $g \in \ca S,$ $j \in J(g),$  $t_j \in \{X_{ij}, i=1, \ldots, n\}$ of
   the least squares fits for the models
   $\ca S \cup \{ \bs x \mapsto (x_j-t_j)_+g(\bs x), \bs x \mapsto (t_j-x_j)_+g(\bs x)\}.$ \;
   $\ca S \leftarrow \ca S \cup \{ \bs x \mapsto (x_{j^*}-t_{j^*}^*)_+g^*(\bs x), \bs x \mapsto (t_{j^*}^*-x_{j^*})_+g^*(\bs x)\}.$
    }
    \KwRet{$\ca S$}
  \caption{MARS forward selection}\label{alg.MARS}
\end{algorithm}

In the original article \cite{MR1091842} the values for $t_j$ are restricted to a smaller set. The least squares functional can be replaced by any other loss. The runtime of the algorithm is of the order $(M')^3nd$, where the key step is the computation of the argmin which is easily parallelizable.  \cite{friedman1993} considers a variation of the MARS procedure where the runtime dependence on $M'$ is only quadratic. 

There are many higher-order generalization of the MARS procedure. E.g., \cite{MR1091842} proposed also to use truncated power basis functions
\begin{align*}
\big(s_j(x_j-t_j)\big)_+^q,\quad q\in\N,
\end{align*}
to represent $q$th order splines. Another common generalization of the MARS algorithm allows for piecewise polynomial basis functions in each argument of the form
\begin{align}
	h_{K, \bs t}(x_1, \ldots, x_d) = \prod_{j=1}^K \big(s_j(x_{i(j)}-t_j)\big)_+, \quad K\in\N,\;s_j\in\{-1,1\},\;\bs t\in[0,1]^K,
	\label{MARS2}
\end{align}
where $i(\,\cdot\,)$ is a mapping $\{1,\hdots,K\}\rightarrow\{1,\hdots,d\}$. The theoretical results of this paper can be generalized in a straightforward way to these types of higher-order basis functions. If the MARS algorithm is applied to the function {class} generated by \eqref{MARS2}, we refer to this as higher order MARS (HO-MARS).

\begin{figure}[ht]
\begin{center}
	\includegraphics[scale=0.6]{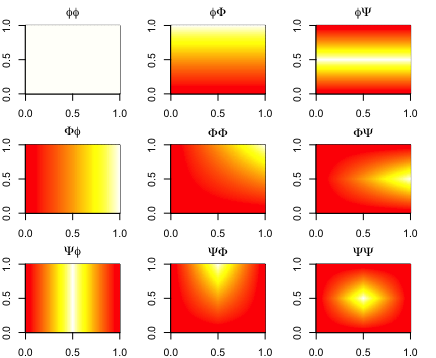} 
\caption{\label{fig.FSfcts} First functions of the bivariate Faber-Schauder {system} displayed as heatmaps.}
\end{center}
\end{figure}

\subsection{Faber-Schauder series expansion}
\label{sec.FS}
The Faber-Schauder {system} defines a family of functions $[0,1]^d\rightarrow\R$ that can be viewed as the {system} of indefinite integrals of the Haar wavelet functions. Faber-Schauder functions are piecewise linear in each component and expansion with respect to the Faber-Schauder {system} results therefore in (componentwise) piecewise linear approximations. 

To describe the expansion of a function with respect to the Faber-Schauder {system}, consider first the univariate case $d=1.$ Let $\phi(x)=1$ and $\psi = \mathbbm{1}(\cdot \in [0,1/2)) - \mathbbm{1}(\cdot \in [1/2,1]).$ The Haar wavelet consists of the functions $\{\phi, \psi_{j,k}, j=0,1,\ldots ; k=0, \ldots, 2^j-1\}$ with $\psi_{j,k} = 2^{j/2}\psi(2^j \cdot -k).$ A detailed introduction to Haar wavelets is given for instance in \cite{wojtaszczyk1997mathematical}.

Define $\Phi(x)= \int _0^x \phi(u) \, du = x$ and $\Psi_{j,k}(x) = \int_0^x \psi_{j,k}(u) \, du.$ The functions $\Psi_{j,k}$ are triangular functions on the interval $[k2^{-j},(k+1)2^{-j}]$. The Faber-Schauder {system} on $[0,1]$ is $\{\phi, \Phi, \Psi_{j,k}, j=0,1,\ldots ; k=0, \ldots, 2^j-1\}.$

\begin{lem}
\label{lem.FS_d=1}
Let $f$ be a function in the Sobolev space $\{f: f' \in L^2[0,1]\}.$ Then, 
\begin{align*}
	f(x) = f(0) + (f(1)-f(0))x + \sum_{j=0}^\infty \sum_{k=0}^{2^j-1} d_{j,k} \Psi_{j,k}(x)
\end{align*}
with $d_{j,k}= - 2^{j/2}(f(\tfrac{k+1}{2^j})- 2f(\tfrac{k+1/2}{2^j})+ f(\tfrac{k}{2^j}))$ and convergence of the sum in $L^2[0,1].$
\end{lem}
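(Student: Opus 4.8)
The plan is to derive the Faber--Schauder expansion of $f$ by first expanding its derivative $f'$ in the Haar basis and then integrating term by term. Since $f' \in L^2[0,1]$, the function $f$ is absolutely continuous and the fundamental theorem of calculus gives $f(x) = f(0) + \int_0^x f'(u)\, du$ for every $x \in [0,1]$. The Haar system $\{\phi, \psi_{j,k}\}$ is an orthonormal basis of $L^2[0,1]$, so I would start from the $L^2$-convergent expansion $f' = \langle f', \phi\rangle \phi + \sum_{j=0}^\infty \sum_{k=0}^{2^j-1} \langle f', \psi_{j,k}\rangle \psi_{j,k}$, where $\langle \cdot, \cdot\rangle$ denotes the $L^2[0,1]$ inner product.

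Next I would identify the coefficients. For the scaling function, $\langle f', \phi\rangle = \int_0^1 f'(u)\, du = f(1) - f(0)$, which matches the linear term $(f(1)-f(0))x$ once $\phi$ is integrated to $\Phi(x) = x$. For the detail coefficients, I would use that $\psi_{j,k}$ equals $+2^{j/2}$ on $[k2^{-j}, (k+\tfrac12)2^{-j})$ and $-2^{j/2}$ on $[(k+\tfrac12)2^{-j}, (k+1)2^{-j})$. Splitting the integral $\langle f', \psi_{j,k}\rangle$ over these two half-intervals and applying the fundamental theorem of calculus on each piece produces exactly $-2^{j/2}(f(\tfrac{k+1}{2^j}) - 2f(\tfrac{k+1/2}{2^j}) + f(\tfrac{k}{2^j})) = d_{j,k}$. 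This is a short, routine computation.

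The remaining point is to pass the integration through the infinite sum and to obtain the claimed mode of convergence. I would introduce the integration operator $T\colon L^2[0,1] \to L^2[0,1]$, $(Tg)(x) = \int_0^x g(u)\, du$, which is bounded (by Cauchy--Schwarz, $|(Tg)(x)| \le \|g\|_{L^2}$ pointwise, hence $\|Tg\|_{L^2} \le \|g\|_{L^2}$) and satisfies $T\phi = \Phi$ and $T\psi_{j,k} = \Psi_{j,k}$ by the definitions given above. Applying the continuous linear map $T$ to the $L^2$-convergent Haar expansion of $f'$ and using $f - f(0) = Tf'$ then yields $f(x) - f(0) = (f(1)-f(0))\Phi(x) + \sum_{j,k} d_{j,k}\Psi_{j,k}(x)$ with convergence in $L^2[0,1]$, which is the assertion.

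The main obstacle is essentially bookkeeping rather than depth: one must read ``$f' \in L^2$'' as $f$ being absolutely continuous with a.e.\ derivative in $L^2$, so that the fundamental theorem of calculus is legitimate, and one must justify interchanging $T$ with the infinite series. The boundedness of $T$ handles the interchange cleanly and simultaneously delivers $L^2$-convergence; alternatively, since $(Tg)(x) = \langle g, \mathbbm{1}_{[0,x]}\rangle$ with $\mathbbm{1}_{[0,x]} \in L^2[0,1]$, the same expansion also holds pointwise for each fixed $x$, and one can then upgrade to $L^2$ convergence.
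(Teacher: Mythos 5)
Your proposal is correct and follows essentially the same route as the paper: expand $f'$ in the Haar orthonormal basis, identify the coefficients via the fundamental theorem of calculus, and integrate the series term by term. Your bounded integration operator $T$ is precisely the paper's justification in operator language — the contraction property $\|Tg\|_{L^2} \leq \|g\|_{L^2}$ is the same fact as the paper's inequality $\|h\|_{L^2[0,1]} \leq \|h'\|_{L^2[0,1]}$ for $h$ with $h(0)=0$, applied to the difference between $f - f(0)$ and the integrated partial sums.
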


\begin{proof}
It is well-known that the Haar wavelet generates an orthonormal basis of $L^2[0,1].$ Since by assumption $f' \in L^2[0,1],$
\begin{align*}
	f'(x) = f(1)-f(0) + \sum_{j=0}^\infty \sum_{k=0}^{2^j-1} d_{j,k} \psi_{j,k}(x)
\end{align*}
with $d_{j,k}=\int f'(u) \psi_{j,k}(u) \, du = -2^{j/2}(f(\tfrac{k+1}{2^j})- 2f(\tfrac{k+1/2}{2^j})+ f(\tfrac{k}{2^j}))$ and convergence in $L^2[0,1].$ The result follows by integrating both sides together with the fact that $\|h\|_{L^2[0,1]} \leq \|h'\|_{L^2[0,1]},$ whenever $h' \in L^2[0,1]$ and $h(0)=0$.
\end{proof}

In the univariate case, expansion with respect to the Faber-Schauder {system} is the same as piecewise linear interpolation on a dyadic grid. On the lowest resolution level, the function $f$ is approximated by $f(0) + (f(1)-f(0))x$ which interpolates the function at $x=0$ and $x=1.$ If we go up to the $j$-th resolution level, the Faber-Schauder reconstruction linearly interpolates the function $f$ on the dyadic grid $x=k/2^{j+1},$ $k=0,1, \ldots, 2^{j+1}.$ The Faber-Schauder system is consequently also a basis for the continuous functions equipped with the uniform norm. In neural networks terminology, this means that an universal approximation theorem holds.

The multivariate Faber-Schauder {system} can be obtained by tensorization. Let $\Psi_{-2}:=\phi,$ $\Psi_{-1}:=\Phi,$ $\Lambda_\ell:=\{-2,-1,(j,k), j=0,1,\ldots \ell; k=0,\ldots, 2^j-1\},$ and $\Lambda=\Lambda_{\infty}.$  Then, we can rewrite the univariate Faber-Schauder {system} on $[0,1]$ as $\{\Psi_\lambda : \lambda \in \Lambda\}.$ The $d$-variate Faber-Schauder {system} on $[0,1]^d$ consists of the functions
\begin{align*}
	\big\{ \Psi_{\lambda_1} \otimes \ldots \otimes \Psi_{\lambda_d} \, : \lambda_i\in \Lambda \big\}.
\end{align*}
In the following, we write $\blambda$ for the vector $(\lambda_1, \ldots, \lambda_d).$ Given $\blambda=(\lambda_1, \ldots, \lambda_d),$ let $I(\blambda)$ denote the set containing all indices with $\lambda_i \neq -2.$ Given an index set $I \subseteq \{1, \ldots, d\},$ $f (0_{I^c}, u_I)$ denotes the function $f$ with components in $I^c= \{1, \ldots, d\}\setminus I$ being zero. Moreover, for $I=\{i_1, \ldots, i_\ell\},$ $\partial_I= \partial_{i_1}\ldots \partial_{i_\ell}$ stands for the partial derivative with respect to the indices $i_1, \ldots, i_\ell.$ Let also $\psi_{-1}:=\phi.$

\begin{lem}
Suppose that for all sets $I\subseteq \{1, \ldots, d\},$ $\partial_I f (0_{I^c}, u_I) \in L^2[0,1]^{| I |}.$ Then, with $\bs x=(x_1, \ldots, x_d) \in [0,1]^d,$
\begin{align*}
	f(\bs x) = \sum_{\blambda \in \Lambda\times \ldots \times \Lambda} d_{\blambda} \Psi_{\lambda_1}(x_1) \cdot \ldots \cdot \Psi_{\lambda_d}(x_d)
\end{align*}
with $d_{\blambda} = \int_0^1 \ldots \int_0^1 \partial_{I(\blambda)} f(0_{I(\blambda)^c}, u_{I(\blambda)}) \prod_{i \in I(\blambda)} \psi_{\lambda_i}(u_i) du_{i_1}\hdots du_{i_\ell}.$ The right hand side should be interpreted as $\lim_{\ell \rightarrow \infty}\sum_{\blambda \in \Lambda_{\ell} \times \ldots \times \Lambda_{\ell}}$ and converges in $L^2([0,1]^d)$.
\end{lem}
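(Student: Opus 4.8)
The plan is to reduce the multivariate expansion to the univariate Lemma \ref{lem.FS_d=1} by iterating the one-dimensional reconstruction coordinate by coordinate. The key structural fact is that tensorization and the univariate expansion interact cleanly: if I fix all but one coordinate and apply the univariate Faber-Schauder expansion in that coordinate, the coefficients are themselves integrals against the Haar functions, and these integrals can be expanded again in the next coordinate. So I would proceed by induction on the dimension $d$, where the base case $d=1$ is exactly the previous lemma (rewritten in the $\{\Psi_\lambda : \lambda \in \Lambda\}$ notation, noting that the constant term $f(0)$ corresponds to $\lambda=-2$ and the linear term to $\lambda=-1$).

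For the inductive step, I would first rewrite the univariate expansion of Lemma \ref{lem.FS_d=1} in the unified form
\begin{align*}
	g(x) = \sum_{\lambda \in \Lambda} \Big(\int_0^1 \partial_{\{1\}}^{\mathbbm{1}(\lambda \neq -2)} g(u)\, \psi_\lambda(u)\, du \Big) \Psi_\lambda(x),
\end{align*}
so that the coefficient attached to $\Psi_\lambda$ is $g(0)$ when $\lambda=-2$ and $\int_0^1 g'(u)\psi_\lambda(u)\,du$ otherwise (here $\psi_{-1}=\phi$). Then, viewing $f(\bs x)$ as a function of $x_d$ with the other coordinates frozen, I would apply this expansion in the $x_d$ direction. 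The resulting coefficient functions depend on $(x_1,\dots,x_{d-1})$, and by the assumption $\partial_I f(0_{I^c}, u_I)\in L^2$ for every $I$, each of these coefficient functions again satisfies the hypotheses of the lemma in dimension $d-1$. Applying the induction hypothesis to each coefficient function and collecting the resulting products of $\Psi_{\lambda_i}$ then produces the claimed formula, with the integrals assembling exactly into $d_{\blambda}=\int \partial_{I(\blambda)} f(0_{I(\blambda)^c}, u_{I(\blambda)}) \prod_{i\in I(\blambda)} \psi_{\lambda_i}(u_i)\, du$. The appearance of $\partial_{I(\blambda)}$ and the evaluation at $0_{I(\blambda)^c}$ is precisely what tracks which coordinates carry a $\lambda_i=-2$ (constant, no derivative, evaluate at $0$) versus a genuine wavelet index (derivative and integration against $\psi_{\lambda_i}$).

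The main obstacle is the convergence and the interchange of limits. The univariate lemma gives convergence in $L^2$ of a single infinite sum, but in $d$ dimensions I am iterating $d$ such expansions and then rearranging a $d$-fold sum indexed by $\blambda$; I must justify that the partial sums over $\Lambda_\ell \times \cdots \times \Lambda_\ell$ converge in $L^2([0,1]^d)$ to $f$. The cleanest route is to observe that the tensor products $\{\Psi_{\lambda_1}\otimes\cdots\otimes\Psi_{\lambda_d}\}$ arise from tensoring the univariate Faber-Schauder system, which is a Schauder basis (indeed an unconditional basis in the relevant Sobolev-type setting), so the full tensor system is a basis of the corresponding $L^2$-tensor space; the truncated sums over $\Lambda_\ell^d$ are then the projections onto finite-dimensional subspaces and converge. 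Alternatively, I would use a Fubini argument together with the contraction estimate $\|h\|_{L^2}\le\|h'\|_{L^2}$ from the univariate proof to bound the tail of the $d$-fold sum uniformly, which legitimizes the interchange of the iterated expansions with the limit $\ell\to\infty$. Verifying this interchange rigorously, while keeping track of the mixed-derivative coefficient structure, is the technically delicate part; the algebraic identification of the coefficients $d_{\blambda}$ is then a direct, if notationally heavy, consequence of iterating the univariate coefficient formula.
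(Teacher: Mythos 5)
Your route---induction on the dimension, iterating the univariate expansion coordinate by coordinate---is genuinely different from the paper's proof, which never iterates expansions. The paper first writes $f$ via the anchored decomposition
\begin{align*}
f(\bs x)=\sum_{I=\{i_1,\ldots,i_\ell\}\subseteq\{1,\ldots,d\}}\int_0^{x_{i_1}}\cdots\int_0^{x_{i_\ell}}\partial_I f(0_{I^c},u_I)\,du_{i_1}\cdots du_{i_\ell},
\end{align*}
then expands each mixed derivative \emph{once} in the tensor Haar basis of $L^2([0,1]^{|I|})$, and finally integrates term by term; since iterated integration is bounded from $L^2$ to the uniform norm by Cauchy--Schwarz, $L^2$-convergence of a single Haar series is all that is ever needed, and no rearrangement of iterated infinite sums arises.

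Your inductive step, however, has a genuine gap at its very first move. To expand $f(x_1,\ldots,x_{d-1},\cdot)$ by the univariate lemma you need $\partial_d f(x_1,\ldots,x_{d-1},\cdot)\in L^2[0,1]$ at the \emph{frozen} values $(x_1,\ldots,x_{d-1})$, but the hypotheses of the lemma only control mixed derivatives anchored at zero, i.e.\ $\partial_I f(0_{I^c},u_I)$; for $I=\{d\}$ this is the slice through the origin only, and nothing is assumed along slices through arbitrary $(x_1,\ldots,x_{d-1})\neq 0$. The natural repair is to write
\begin{align*}
\partial_d f(x_1,\ldots,x_{d-1},u)=\sum_{J=\{j_1,\ldots,j_m\}\subseteq\{1,\ldots,d-1\}}\int_0^{x_{j_1}}\cdots\int_0^{x_{j_m}}\partial_{J\cup\{d\}} f\big(0_{(J\cup\{d\})^c},v_J,u\big)\,dv_{j_1}\cdots dv_{j_m}
\end{align*}
and bound each term in $L^2(du)$ by Cauchy--Schwarz---but this identity is exactly the anchored decomposition the paper starts from, so your induction is not self-contained: it silently requires the paper's key identity as an extra ingredient. (Your check that the coefficient functions satisfy the $(d-1)$-dimensional hypotheses is sound in spirit---Cauchy--Schwarz handles $\lambda_d\neq-2$, and the case $\lambda_d=-2$ is literally one of the assumptions---but it does not repair this earlier step.) A second, smaller soft spot is the interchange you flag yourself: the claim that tensor products of Schauder bases form a basis, in an ordering compatible with the truncations $\Lambda_\ell\times\cdots\times\Lambda_\ell$, is nontrivial and left unproved, whereas the paper's one-shot expansion sidesteps the issue because the tensor Haar system is an orthonormal basis of $L^2$.
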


\begin{proof}
By induction on $d,$
\begin{align*}
	f(\bs x) = \sum_{I =\{i_1, \ldots, i_{\ell} \}\subseteq \{1, \ldots, d\}} 
	\int_0^{x_{i_1}}\ldots \int_0^{x_{i_{\ell}}} \partial_I f (0_{I^c}, u_I)  du_{i_1}\ldots du_{i_{\ell}}.
\end{align*}
Now, we can expand each of the partial derivatives with respect to the Haar wavelet
\begin{align*}
	\partial_I f (0_{I^c}, u_I) = \sum_{\blambda = (\lambda_1, \ldots, \lambda_{\ell}) \in (\Lambda \setminus \{-2\})^\ell} d_{\blambda} \psi_{\lambda_1}(u_{i_1}) \cdot\ldots\cdot \psi_{\lambda_{\ell}}(u_{i_\ell})
\end{align*}
using that $\psi_{-1}:=\phi$ and $(\Lambda \setminus \{-2\})^\ell := (\Lambda \setminus \{-2\}) \times \ldots \times (\Lambda \setminus \{-2\}).$ Convergence is in $L^2[0,1]^{\ell}$ for the limit $\lim_{k \rightarrow \infty}\sum_{\blambda \in (\Lambda_k \setminus \{-2\})^\ell}.$ Therefore, for $\bs x=(x_1, \ldots, x_d) \in [0,1]^d,$
\begin{align*}
	f(\bs x) 
	&= \sum_{I =\{i_1, \ldots, i_{\ell} \}\subseteq \{1, \ldots, d\}} \ \ 
	\sum_{\blambda = (\lambda_1, \ldots, \lambda_{\ell}) \in (\Lambda \setminus \{-2\})^\ell} d_{\blambda} \Psi_{\lambda_1}(x_{i_1}) \cdot\ldots\cdot \Psi_{\lambda_{\ell}}(x_{i_\ell}) \\
	&= \sum_{\blambda \in \Lambda\times \ldots \times \Lambda} d_{\blambda} \Psi_{\lambda_1}(x_1) \cdot \ldots \cdot \Psi_{\lambda_d}(x_d)
\end{align*}
using that $\Psi_{-2}=1$ on $[0,1].$
\end{proof}

Because the Haar wavelet functions are piecewise constant, the coefficients $d_{\blambda}$ can also be expressed as a linear combination of function values of $f.$ In particular, we recover for $d=1$ that $d_{\lambda}=-2^{j/2}(f(\tfrac{k+1}{2^j})- 2f(\tfrac{k+1/2}{2^j})+ f(\tfrac{k}{2^j})) $ if $\lambda =(j,k).$

Similarly as in the univariate case, the multivariate Faber-Schauder system interpolates the function on a dyadic grid. Given that $f(\bs x) = \sum_{\blambda \in \Lambda\times \ldots \times \Lambda} d_{\blambda} \Psi_{\lambda_1}(x_1) \cdot \ldots \cdot \Psi_{\lambda_d}(x_d),$ let
$$T_\ell f(\bs x) = \sum_{\blambda \in \Lambda_\ell \times \ldots \times \Lambda_\ell} d_{\blambda} \Psi_{\lambda_1}(x_1) \cdot \ldots \cdot \Psi_{\lambda_d}(x_d).$$ It can be shown that $T_\ell f(\bs x) = f(\bs x)$ for all grid points $\bs x =(x_1, \ldots, x_d)$ with each component $x_j \in \{k2^{-\ell-1}\, : \, k=0, \ldots, 2^{\ell+1}\}.$ Since $T_\ell f(\bs x)$ is also piecewise linear in each component, $T_\ell f$ coincides with the linear interpolating spline for dyadic knot points. Another consequence is that the universal approximation theorem also holds for the multivariate Faber-Schauder class on $[0,1]^d.$

\begin{defi}[Faber-Schauder class]
For any non-negative integer $M$ and any constant $C>0$ the $d$-variate Faber-Schauder class truncated at resolution level $M$ is defined as
\begin{align*}
\Fs(M,C)=\Big\{\bs x\mapsto \sum_{\blambda \in \Lambda_M\times \ldots \times \Lambda_M} \beta_{\blambda} \Psi_{\lambda_1}(x_1) \cdot \ldots \cdot \Psi_{\lambda_d}(x_d) \, : \,   |\beta_{\blambda}| \leq C \Big\}.
\end{align*}
We further assume that $\|f\|_\infty\leq F$ for all $f\in\Fs(M,C)$ for some $F>0$.
\end{defi}

The cardinality of the index set $\Lambda_M$ is $(1+2^{M+1}).$ The {class} $\Fs(M,C)$ therefore consists of functions with 
\begin{align}
	I:= (1+2^M)^d	
	\label{eq.I_def}
\end{align}
many parameters. Every function $\Psi_{\lambda_1} \cdot \ldots \cdot \Psi_{\lambda_d}$ can be represented as a linear combination of $3^d$ MARS functions. This yields the following result.

\begin{lem}
\label{lem.FS_embedding}
Let $I$ be defined as in \eqref{eq.I_def}. Then, for any $M, C>0,$
\begin{align*}
	\Fs(M, C) \subseteq \Fm(3^d I, 2^{d(M+2)/2}C).
\end{align*}
\end{lem}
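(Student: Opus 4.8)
The plan is to reduce the multivariate statement to a decomposition of each univariate Faber-Schauder function into positive-part (MARS-type) factors, and then to tensorize, keeping careful track both of the number of basis functions produced and of the sizes of the coefficients. Since a function $f \in \Fs(M,C)$ is literally rewritten (not approximated), the constraint $\|f\|_\infty \le F$ will be inherited for free, so the only two things to control are the count and the coefficient bound.

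First I would treat the univariate factors $\Psi_\lambda$, $\lambda \in \Lambda_M$, case by case. For $\lambda = -2$ we have $\Psi_{-2} \equiv 1$, which is (the empty product, hence) a single MARS function with coefficient $1$; for $\lambda = -1$ we have $\Psi_{-1}(x) = x = (x-0)_+$ on $[0,1]$, again a single MARS factor with coefficient $1$. For $\lambda = (j,k)$ the function $\Psi_{j,k}$ is the tent function on $[k2^{-j},(k+1)2^{-j}]$ with peak height $2^{-j/2-1}$ at the midpoint, and writing a symmetric tent as a superposition of three ReLUs placed at the two endpoints and the midpoint gives
\begin{align*}
	\Psi_{j,k}(x) = 2^{j/2}\Big[(x - k2^{-j})_+ - 2\big(x - (k+\tfrac12)2^{-j}\big)_+ + (x - (k+1)2^{-j})_+\Big],
\end{align*}
a linear combination of at most three univariate MARS factors whose coefficients have modulus at most $2\cdot 2^{j/2} \le 2^{(M+2)/2}$, using $j \le M$.

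Next I would tensorize. Because a MARS basis function $h_{I,\bs t}$ is exactly a product of univariate positive-part functions over distinct coordinates, multiplying out the per-coordinate decompositions of $\Psi_{\lambda_1}(x_1)\cdots\Psi_{\lambda_d}(x_d)$ yields a linear combination of products of univariate MARS factors, i.e.\ of MARS basis functions of the form \eqref{eq.MARS_bf} with index set $I(\blambda)$. Each coordinate contributes at most three terms, so each product $\Psi_{\lambda_1}(x_1)\cdots\Psi_{\lambda_d}(x_d)$ is a linear combination of at most $3^d$ MARS functions; this is the claim announced just before the lemma. The coefficient of each resulting MARS function is a product of $d$ univariate coefficients, each of modulus at most $2^{(M+2)/2}$ (and $1$ for the factors with $\lambda_i \in \{-2,-1\}$), hence of modulus at most $2^{d(M+2)/2}$.

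Finally I would assemble the full sum: $f = \sum_{\blambda} \beta_{\blambda}\, \Psi_{\lambda_1}(x_1)\cdots\Psi_{\lambda_d}(x_d)$ runs over the $I$ index vectors $\blambda$ with $|\beta_{\blambda}| \le C$, so replacing each product by its MARS representation produces at most $3^d I$ MARS basis functions, and each new coefficient is $\beta_{\blambda}$ times a univariate-coefficient product, of modulus at most $C\cdot 2^{d(M+2)/2}$. Together with the inherited bound $\|f\|_\infty \le F$, this places $f$ in $\Fm(3^d I, 2^{d(M+2)/2}C)$. The bookkeeping is routine; the only step requiring care is the coefficient bound, where one must notice that the extremal univariate coefficient is the doubled midpoint term $2\cdot 2^{j/2}$ at the finest admissible level $j = M$, which is precisely what produces the factor $2^{(M+2)/2}$ per coordinate and hence $2^{d(M+2)/2}$ overall.
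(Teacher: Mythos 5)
Your proof is correct and follows essentially the same route as the paper's: the identical three-term ReLU decomposition $\Psi_{j,k}(x) = 2^{j/2}(x-k2^{-j})_+ - 2^{j/2+1}(x-(k+\tfrac12)2^{-j})_+ + 2^{j/2}(x-(k+1)2^{-j})_+$ followed by tensorization to get at most $3^d$ MARS functions per index $\blambda$ with coefficients bounded by $|\beta_{\blambda}|2^{d(M+2)/2}$. Your version merely spells out the bookkeeping (the $\lambda\in\{-2,-1\}$ cases, the per-coordinate coefficient bound, and the inherited sup-norm constraint) that the paper leaves implicit.
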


\begin{proof}
For $j=0,1,\ldots,$ $k=0, \ldots, 2^j-1,$ 
\begin{align*}
	\Psi_{jk}(x)
	= 
	2^{j/2}\Big( x -\frac{k}{2^j}\Big)_+
	-2^{j/2+1}\Big( x -\frac{k+1/2}{2^j}\Big)_+ + 2^{j/2}\Big( x -\frac{k+1}{2^j}\Big)_+
\end{align*}
This shows that any function $\beta_{\blambda} \Psi_{\lambda_1}(x_1) \cdot \ldots \cdot \Psi_{\lambda_d}(x_d)$ with $\blambda \in \Lambda_M\times \ldots \times \Lambda_M$ can be written as a linear combination of at most $3^d$ MARS functions with coefficients bounded by $|\beta_{\blambda}| 2^{d(M+2)/2}.$
\end{proof}

A consequence is that also the MARS functions satisfy an universal approximation theorem.

{\bf Reconstruction with respect to the Faber-Schauder class:} Given a sample $(\bX_i,Y_i),$ $i=1, \ldots,n,$ we can fit the best least-squares Faber-Schauder approximation to the data by solving
\begin{align*}
	\widehat f \in \argmin_{f\in \Fs(M, \infty)} \sum_{i=1}^n \big(\mathrm Y_i -f(\bX_i) \big)^2.
\end{align*}
The coefficients can be equivalently obtained by the least-squares reconstruction in the linear model $\mathbf{Y}= \mathbb{X} \bbeta +\beps$ with observation vector $\mathbf{Y} =(\mathrm Y_1, \ldots, \mathrm Y_n)^\top,$ coefficient vector $\bbeta=(\beta_{\blambda})_{\blambda \in \Lambda_M \times \ldots \times \Lambda_M}^\top$ and $n \times  I$ design matrix 
\begin{align*}
	\mathbb{X} = \Big( \Psi_{\lambda_1}(\mathrm X_{i1}) \cdot \ldots \cdot \Psi_{\lambda_d}(\mathrm X_{id}) \Big)_{i=1,\ldots,n; (\lambda_1, \ldots, \lambda_d) \in \Lambda_M\times \ldots \times \Lambda_M}.
\end{align*}
The matrix $\mathbb{X}$ is sparse because of disjoint supports of the Faber-Schauder functions. The least squares estimate is $\widehat{\bbeta} = (\mathbb{X}^\top\mathbb{X})^{-1}\mathbb{X}^\top \mathbf{Y}.$ If $\mathbb{X}^\top\mathbb{X}$ is singular or close to singular, we can employ Tikhonov regularization/ ridge regression leading to $\widehat{\bbeta}_\alpha = (\mathbb{X}^\top\mathbb{X}+ \alpha\mathbb I)^{-1}\mathbb{X}^\top \mathbf{Y}$ with $\mathbb I$ the identity matrix and $\alpha >0.$ For large matrix dimension, direct inversion is computationally infeasible. Instead one can employ stochastic gradient descent which has a particularly simple form and coincides for this problem with the randomized Kaczmarz method, \cite{Needell2014}.

\section{Deep neural networks (DNNs)}\label{Sec3}

Throughout the article, we consider deep ReLU networks. This means that the activation function is chosen as $\sigma(x)=(x)_+=\max(x,0),\;x\in\R.$ In computer science, feedforward DNNs are typically introduced via their representation as directed graphs. For our purposes, it is more convenient to work with the following equivalent algebraic definition. 

The number of hidden layers or depth of the multilayer neural network is denoted by $L.$ The  width of the network is described by a width vector $\bs p=(p_0,\hdots,p_{L+1})\in\N^{L+2}$ with $p_0=d$ the input dimension. For vectors $\bs y=(y_1, \ldots,y_r), \bs v=(v_1, \ldots,v_r)\in\R^r,$ define the shifted activation function $\sigma_{\bs v}:\R^r\rightarrow\R^r,$ $\sigma_{\bs v}(\bs y) = (\sigma(y_1-v_1), \ldots, \sigma(y_r-v_r))^\top.$

A network is then any function of the form 
\begin{align}
	 f(\bs x)=W_{L+1}\circ\sigma_{\bs v_L}\circ \hdots\circ W_2\circ\sigma_{\bs v_1}\circ W_1\bs x,\quad \bs x\in\R^d,
	 \label{eq.DNN_def}
\end{align}
with $W_\ell\in\R^{p_\ell\times p_{\ell-1}}$ and $\bs v_\ell\in\R^{p_\ell}.$ For convenience, we omit the composition symbol $``\circ"$ and write $f(\bs x)=W_{L+1}\sigma_{\bs v_L}\cdots W_2 \sigma_{\bs v_1}  W_1\bs x$. To obtain networks that are real-valued functions, we must have $p_{L+1}=1$.

During the network learning, regularization is induced by combining different methods such as weight decay, batch normalization and dropout. A theoretical description of these techniques seems, however, out of reach at the moment. To derive theory,  sparsely connected networks are commonly assumed which are believed to have some similarities with regularized network reconstructions. Under the sparsity paradigm it is assumed that most of the network parameters are zero or non-active. The number of {\it active/non-zero parameters} is defined as
\begin{align*}
s=\sum_{\ell=1}^{L+1}|W_\ell|_0+\sum_{\ell=1}^L|\bs v_\ell|_0,
\end{align*}
where $|W_\ell|_0$ and $|\bs v_\ell|_0$ denote the number of non zero entries  of the weight matrices $W_\ell$ and shift vectors $\bs v_\ell$. The function class of all networks with the same network architecture $(L, \bs p)$ and network sparsity $s$ can then be defined as follows.

\begin{defi}[ReLU network class]
For $L,s\in\N,$ $F>0,$ and $\bs p=(p_0,\hdots,p_{L+1})\in\N^{L+2}$ denote by $\ca F(L,\bs p,s)$ the class of $s$-sparse ReLU networks $f$ of the form \eqref{eq.DNN_def} for which the absolute value of all parameters, that is, the entries of $W_\ell$ and $\bs v_\ell$, are bounded by one and $\|f\|_\infty\leq F.$
\end{defi}

Network initialization is typically done by sampling small parameter weights. If one would initialize the weights with large values, learning is known to perform badly (\cite{Goodfellow-et-al-2016}, Section 8.4). In practice bounded parameters are therefore fed into the learning procedure as a prior assumption via network initialization. We have incorporated this in the function class by assuming that all network parameters are bounded by a universal constant which for convenience is taken to be one here.

Fitting a DNN to data is typically done via stochastic gradient descent of the empirical loss. The practical success of DNNs is partially due to a  combination of several methods and tricks that are build into the algorithm. An excellent treatment is \cite{Goodfellow-et-al-2016}.

\section{Deep networks vs. spline type methods}
\label{sec.comparison}

\subsection{Improper learning of spline type methods by deep networks}\label{sec31}
We derive below  ReLU network architectures which allow for an approximation of every MARS and Faber-Schauder function on the unit cube $[0,1]^d$ up to a small error.

\begin{lem}\label{3}
For fixed input dimension, any positive integer $M,$ any constant $C,$ and any $\ve \in (0,1],$ there exists $(L,\bs p,s)$ with
\begin{align*}
L&=O\Big(\log \frac{MC}{\ve}\Big),\quad \max_{\ell=0,\hdots,L+1} p_\ell=O(M), \quad s=O\Big(M\log \frac{MC}{\ve}\Big),
\end{align*}
such that 
\begin{align*}
\inf_{h\in\ca F(L,\bs p,s)}\|f-h\|_\infty\leq\ve \quad \text{for all} \ f\in\Fm(M,C).
\end{align*}
\end{lem}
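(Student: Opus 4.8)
The plan is to reduce the problem of approximating an arbitrary MARS function $f = \beta_0 + \sum_{m=1}^M \beta_m h_m$ to the problem of approximately computing each basis function $h_m$ by a ReLU sub-network and then taking a linear combination. Since $f$ is a sum of at most $M$ terms with $|\beta_m| \le C$, if each $h_m$ can be approximated to sup-norm error $\delta$ by a network, then $\|f - \sum_m \beta_m \widehat h_m\|_\infty \le C M \delta$, so it suffices to take $\delta = \eps/(CM)$. This is why the target error appears inside the logarithm as $MC/\eps$: the per-factor accuracy must scale down with the number and magnitude of the terms.

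The core object to build is a network approximating a single MARS basis function $h_{I,\bs t}(\bs x) = \prod_{j \in I}\big(s_j(x_j - t_j)\big)_+$, which is a product of at most $d$ univariate ReLU ridge functions. Each factor $\big(s_j(x_j-t_j)\big)_+$ is itself exactly a shifted ReLU unit, so the only genuinely nonlinear operation to emulate is multiplication. Here I would invoke the standard construction (attributed in the introduction to \cite{Prodmult, 123423}) that the map $(x,y)\mapsto xy$ on a bounded domain can be approximated to accuracy $\delta$ by a ReLU network of depth $O(\log(1/\delta))$ and a bounded number of parameters per layer, built from the fact that $x\mapsto x^2$ (hence $xy = \tfrac14((x+y)^2-(x-y)^2)$) admits an efficient ReLU approximation via the sawtooth/self-composition trick. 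Multiplying the $|I|\le d$ factors is then done by a balanced binary tree of such multiplication gadgets of depth $O(\log d)$, each level incurring depth $O(\log(1/\delta))$; since $d$ is fixed, the total depth for one basis function is $O(\log(1/\delta)) = O(\log(MC/\eps))$. The inputs to these multiplications are uniformly bounded (the factors lie in $[0,1]$ on the cube), which is needed to control error propagation through the product.

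The next step is parallelization across the $M$ basis functions. I would place the $M$ basis-function sub-networks side by side so that they share the same depth $L = O(\log(MC/\eps))$, padding shallower ones with identity layers (realizable for ReLU networks via $x = \sigma(x)-\sigma(-x)$). Because each sub-network reads only the $d$ input coordinates and uses $O(\log(MC/\eps))$ layers with $O(1)$ width (as $d$ is fixed), running $M$ of them in parallel gives width $\max_\ell p_\ell = O(M)$ and total number of active parameters $s = O\big(M\log(MC/\eps)\big)$, matching the claimed budget. The final linear layer forms $\beta_0 + \sum_m \beta_m \widehat h_m$. One must also arrange that the parameter-boundedness constraint (all weights bounded by one) and the output bound $\|h\|_\infty \le F$ are respected; the weight bound is typically handled by absorbing large constants such as the $\beta_m$ and the $2^{j/2}$-type factors into extra layers that rescale gradually, or by a standard lemma converting a network with bounded-magnitude weights into one with weights in $[-1,1]$ at the cost of a constant factor in depth and width.

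The main obstacle I anticipate is bookkeeping the error and parameter-magnitude bounds simultaneously. The multiplication network's error is multiplicative and compounds across the product tree, and the coefficients $|\beta_m|\le C$ amplify errors, so I must choose the internal accuracy $\delta$ and the tree structure carefully to guarantee the clean $CM$ dependence without blowing up the depth beyond $O(\log(MC/\eps))$; keeping the arguments of every multiplication gadget bounded (so the relative error estimates apply) is the delicate part. The second subtlety is enforcing the weight-magnitude-at-most-one constraint while still representing coefficients of size $C$ and Faber–Schauder-type normalizers, which requires spreading multiplications across additional layers — this is routine but must be accounted for in the final $s$ and $L$ counts. I would isolate the single-basis-function construction as a self-contained sub-lemma and then treat the assembly and error aggregation as a relatively mechanical final step.
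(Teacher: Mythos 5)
Your proposal is correct and follows essentially the same route as the paper's proof: compute the factors $(s_j(x_j-t_j))_+$ exactly in the first layer, approximate the product with the multiplication network of Lemma~\ref{mmult} (your binary tree of pairwise gadgets is exactly what that lemma encapsulates), parallelize the $M$ sub-networks, handle the coefficients $\beta_m$ under the unit weight bound by $O(\log C)$ rescaling layers followed by a final layer with weights $C^{-1}\beta_m$, and choose the internal accuracy of order $\eps/(CM)$.
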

Recall that $f \in\Fm(M,C)$ consists of $M$ basis functions. The neural network $h$ should consequently also have at least $O(M)$ active parameters. The previous result shows that for neural networks we need $O(M\log(M/\ve))$ many parameters. Using DNNs, the number of parameters is thus inflated by a factor $\log(M/\ve)$ that depends on the error level $\ve$.

Recall that $I =(1+2^M)^d.$ Using the embedding of Faber-Schauder functions into the MARS function class given in Lemma \ref{lem.FS_embedding}, we obtain as a direct consequence

\begin{lem}\label{20}
For fixed input dimension, any positive integer $M,$ any constant $C,$ and any $\ve \in (0,1],$ there exists $(L,\bs p,s)$ with
\begin{align*}
L&=O\Big(\log \frac{ \, I\, C}{\ve}\Big),\quad \max_{\ell=0,\hdots,L+1} p_\ell=O(\, I\,), \quad s=O\Big(\, I\,\log \frac{ \, I\, C}{\ve}\Big),
\end{align*}
such that 
\begin{align*}
\inf_{h\in\ca F(L,\bs p,s)}\|f-h\|_\infty\leq\ve \quad \text{for all} \ f\in\Fs(M,C).
\end{align*}
\end{lem}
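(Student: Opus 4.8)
The plan is to obtain Lemma~\ref{20} directly from the two previous results: the embedding of Faber--Schauder functions into the MARS function class (Lemma~\ref{lem.FS_embedding}) together with the network approximation bound for MARS functions (Lemma~\ref{3}). Concretely, Lemma~\ref{lem.FS_embedding} gives $\Fs(M,C)\subseteq \Fm(3^d I, 2^{d(M+2)/2}C)$, so it suffices to exhibit a single architecture that approximates every function in the larger MARS class $\Fm(M',C')$ with $M'=3^d I$ and $C'=2^{d(M+2)/2}C$ up to sup-norm error $\ve$. Applying Lemma~\ref{3} with these values of $M'$ and $C'$ immediately produces $(L,\bs p, s)$ with $L=O(\log(M'C'/\ve))$, $\max_\ell p_\ell=O(M')$ and $s=O(M'\log(M'C'/\ve))$, so that the whole proof reduces to rewriting these three quantities in terms of $I$.

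I would then carry out this rewriting. Since the input dimension $d$ is fixed, the factor $3^d$ is an absolute constant, whence $M'=3^d I=O(I)$ and in particular $\max_\ell p_\ell=O(I)$. For the logarithmic factor I would use that $I=(1+2^M)^d\ge 2^{dM}$, so $2^{dM/2}\le\sqrt{I}$ and
\begin{align*}
C' = 2^{d(M+2)/2}C = 2^d\, 2^{dM/2}\, C \le 2^d \sqrt{I}\, C.
\end{align*}
Consequently $M'C'/\ve\le 6^d I^{3/2}C/\ve$, and taking logarithms,
\begin{align*}
\log\frac{M'C'}{\ve} \le \tfrac{3}{2}\log I + \log\frac{C}{\ve} + d\log 6 = O\Big(\log\frac{IC}{\ve}\Big),
\end{align*}
using that $d$ is fixed, that $\log(I^{3/2})=\tfrac{3}{2}\log I=O(\log I)$, and that $I\ge 2$ so $\log I$ is bounded away from zero. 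Substituting these back into the bounds from Lemma~\ref{3} yields $L=O(\log(IC/\ve))$ and $s=O(M'\log(M'C'/\ve))=O(I\log(IC/\ve))$, which are precisely the claimed estimates.

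The only step requiring genuine care is the bookkeeping on the enlarged coefficient bound $C'=2^{d(M+2)/2}C$: a priori this factor is exponential in the resolution level $M$, and one must check that it does not inflate the network size. The observation that rescues the argument is that $I$ is itself exponential in $M$, so $2^{d(M+2)/2}$ is only of polynomial order $O(\sqrt{I})$ in $I$ and therefore contributes a mere constant multiple inside the logarithm. The growth of the number of basis functions from $I$ to $3^d I$ when passing from the Faber--Schauder class to MARS is harmless for the same reason, as $d$ is held fixed throughout.
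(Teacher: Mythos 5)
Your proposal is correct and follows exactly the paper's route: the paper also obtains Lemma~\ref{20} as a direct consequence of the embedding $\Fs(M,C)\subseteq \Fm(3^d I, 2^{d(M+2)/2}C)$ from Lemma~\ref{lem.FS_embedding} combined with Lemma~\ref{3}. Your explicit bookkeeping showing that the inflated coefficient bound $2^{d(M+2)/2}C = O(\sqrt{I}\,C)$ only changes the logarithmic factor by a constant is precisely the (elided) detail that justifies the paper's phrase ``direct consequence.''
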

The number of required network parameters $s$ needed to approximate a Faber-Schauder function with $I$ parameters is thus also of order $I$ up to a logarithmic factor.

The proofs of Lemma \ref{3} and Lemma \ref{20} are constructive. To find a good initialization of a neural network, we can therefore run MARS in a first step and then use the construction in the proof to obtain a DNN that generates the same function up to an error $\ve.$ This DNN is a good initial guess for the true input-output relation and can therefore be used as initialization for any iterative method. 

There exists a network architecture which allows to approximate the product of any two  numbers in $[0,1]$ at an exponential rate with respect to the number of network parameters, cf. \cite{Prodmult, schmidthieber2017}. This construction is the key step in the approximation theory of DNNs with ReLU activation functions and allows us to approximate the MARS and Faber-Schauder function class. More precisely, we consider the following iterative extension to the product of $r$ numbers in $[0,1]$. Notice that a fully connected DNN with architecture $(L, \mathbf{p})$ has $\sum_{\ell=0}^L(p_\ell+1)p_{\ell+1}-p_{L+1}$ network parameters.

\begin{lem}[Lemma 5 in \cite{schmidthieber2017}]\label{mmult}
For any $N \geq 1,$ there exists a network $\mathrm{Mult}_N^r\in\ca F((N+5)\lceil\log_2r\rceil,(r,6r,6r,\hdots,6r,1),s)$ such that $\mathrm{Mult}_N^r(\bs x)\in[0,1]$ and
\begin{align*}
\Big|\mathrm{Mult}_N^r(\bs x)-\prod_{j=1}^rx_j\Big|\leq 3^r2^{-N}\quad\text{for all }\bs x=(x_1,\hdots,x_r)\in[0,1]^r.
\end{align*}
The total number of network parameters $s$ is bounded by $42r^2(1+(N+5)\lceil\log_2r\rceil)$.
\end{lem}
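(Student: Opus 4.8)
The plan is to reduce the $r$-fold product to iterated binary products, and to build a single binary-product gadget from an approximation of the univariate square map $x\mapsto x^2$. The three ingredients are: (i) a small, deep ReLU network $\mathrm{Sq}_N$ that approximates $x^2$ on $[0,1]$ to error of order $2^{-N}$; (ii) the polarization identity, which turns $\mathrm{Sq}_N$ into a binary-product gadget $\mathrm{Mult}_N^2$ of constant width approximating $(x,y)\mapsto xy$ on $[0,1]^2$ to error $O(2^{-N})$; and (iii) a balanced binary tree of depth $\lceil\log_2 r\rceil$ whose nodes are copies of $\mathrm{Mult}_N^2$, computing $\prod_{j=1}^r x_j$ by repeated pairing.

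For (i) I would use the standard self-composition construction. Let $g\colon[0,1]\to[0,1]$ be the tent map $g(x)=2x$ for $x\le\tfrac12$ and $g(x)=2(1-x)$ for $x\ge\tfrac12$, which one ReLU layer realizes exactly via $g(x)=2\sigma(x)-4\sigma(x-\tfrac12)$. Writing $g_k$ for the $k$-fold self-composition, one checks that $x-\sum_{k=1}^{m}2^{-2k}g_k(x)$ is the piecewise-linear interpolant of $x^2$ on the grid $\{i2^{-m}\}$ and differs from $x^2$ by at most $2^{-2m-2}$ in sup-norm. Taking $m\asymp N$ and implementing the telescoping sum inside the network (carrying the running partial sum, which stays in $[0,1]$, through the layers via $u=\sigma(u)$) yields $\mathrm{Sq}_N$ with depth $O(N)$ and $O(1)$ width. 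Using $xy=\mathrm{Sq}(\tfrac{x+y}{2})-\mathrm{Sq}(\tfrac{|x-y|}{2})$ after the affine rescalings that keep both arguments in $[0,1]$ gives $\mathrm{Mult}_N^2$ of depth at most $N+5$, where the additive constant absorbs the rescalings, the clippings, and the final subtraction layer.

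For (iii) I would place $\lceil r/2\rceil$ independent copies of $\mathrm{Mult}_N^2$ side by side in the first block and halve the number of active units in each subsequent block, so that the width never exceeds $6r$ and the depth is exactly $(N+5)\lceil\log_2 r\rceil$. The error is controlled by induction over the levels of the tree: if the two inputs to a node approximate true partial products $a,b\in[0,1]$ with errors $\alpha,\beta$, then, since all quantities lie in $[0,1]$, the node output approximates $ab$ with error at most $\alpha+\beta+O(2^{-N})$; iterating this additive estimate up the $\lceil\log_2 r\rceil$ levels and bounding the accumulated constant crudely gives the stated bound $3^r2^{-N}$, while clipping guarantees $\mathrm{Mult}_N^r(\bs x)\in[0,1]$. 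The parameter count is then immediate bookkeeping: each of the $L+1=(N+5)\lceil\log_2 r\rceil+1$ weight matrices between layers of width at most $6r$ contributes at most $(6r)^2+6r\le 42r^2$ weights and biases, yielding $42r^2\bigl(1+(N+5)\lceil\log_2 r\rceil\bigr)$.

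The main obstacle is the bookkeeping rather than any single idea. One must calibrate $m\asymp N$ in the square gadget so that the per-node error is genuinely $O(2^{-N})$, verify that every intermediate value (after the affine maps in the polarization step) is legitimately fed into the gadget as an element of $[0,1]$ so that the gadget's error guarantee applies, and confirm that the parallel-then-halving layout respects both the width bound $6r$ and the exact depth $(N+5)\lceil\log_2 r\rceil$. Keeping all intermediate activations clipped to $[0,1]$ is precisely what makes the inductive error estimate close, and is the step most easily gotten wrong; note also that the exponential constant $3^r$ is a deliberately loose bound (the honest propagation is only linear in $r$), chosen because in the intended application $r=d$ is a fixed small dimension.
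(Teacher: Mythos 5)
Your construction is correct and follows essentially the same route as the source of this result: the paper does not prove the lemma itself but imports it as Lemma 5 from \cite{schmidthieber2017}, whose proof is exactly this combination of Yarotsky-style tent-map self-composition for $x^2$, polarization to obtain a constant-width binary product gadget, and a balanced binary tree of such gadgets, with the same width, depth, and parameter bookkeeping (including the deliberately loose $3^r$ in the error bound). No gaps worth flagging.
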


\subsection{Lower bounds}\label{lowbo}

In this section, we show that MARS and Faber-Schauder class require much more parameters to approximate the functions $f(x) =x^2$ and $f(x_1,x_2)=(x_1+x_2-1)_+.$ This shows that the converse of Lemma \ref{3} and Lemma \ref{20} is false and also gives some insights when DNNs work better. 

{\bf The case $\bs{f(x) =x^2:}$} This is an example of a univariate function where DNNs need much fewer parameters compared to MARS and Faber-Schauder class. 

\begin{thm}
\label{thm.lb1}
Let $\eps >0$ and $f(x) =x^2.$ 
\begin{itemize}
\item[(i)] If $M \leq 1/(6\sqrt{\eps}),$ then $$\inf_{g\in \Fm(M,\infty)}\|g-f\|_\infty \geq \eps.$$ 
\item[(ii)] If $2^{M+5} \geq 1/\sqrt{\eps}$ then $$\inf_{g\in \Fs(M,\infty)}\|g-f\|_\infty \geq \eps.$$ 
\item[(iii)] There exists a neural network $h$ with $O(\log(1/\eps))$ many layers and $O(\log (1/\eps))$ many network parameters such that $$\|h-f\|_\infty \leq \eps.$$
\item[(iv)] {} (\cite{Prodmult}, Theorem 6) Given a network architecture $(L,\bs p),$ there exists a positive constant $c(L)$ such that if $s \leq c(L) \eps^{-1/(2L)},$ then $$\inf_{g\in \ca F(L, \bs p, s)}\|g-f\|_\infty \geq \eps.$$
\end{itemize}
\end{thm}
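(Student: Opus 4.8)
The plan is to handle the two lower bounds (i) and (ii) with a single engine and to get the upper bounds (iii)--(iv) essentially for free. The engine is the elementary fact that for the convex function $f(x)=x^2$ the best uniform approximation by an affine function on an interval of length $\ell$ has error exactly $\ell^2/8$. I would prove this in one line via Chebyshev equioscillation: the minimax affine approximant to $x^2$ on $[a,b]$ is the one whose error equioscillates at $a$, $(a+b)/2$ and $b$, and evaluating the error there gives $(b-a)^2/8$. Since the sup-norm over $[0,1]$ dominates the sup-norm over any subinterval, for each class it then suffices to exhibit one interval on which every candidate function is affine and whose length is controlled from below.

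For (i) I would first record the piecewise-linear structure: in dimension one each factor in \eqref{eq.MARS_bf} is either constant or a single hinge $(s(x-t))_+$, so any $g\in\Fm(M,\infty)$ is continuous and piecewise affine with at most $M$ kinks, hence at most $M+1$ affine pieces on $[0,1]$. One of these has length at least $1/(M+1)$; restricting the engine to it gives
\begin{align*}
\inf_{g\in\Fm(M,\infty)}\|g-f\|_\infty\ \ge\ \frac{1}{8(M+1)^2}.
\end{align*}
Then I would close the gap by arithmetic: the hypothesis $M\le 1/(6\sqrt\eps)$ gives $\eps\le 1/(36M^2)$, and since $36M^2\ge 8(M+1)^2$ for every integer $M\ge1$, the displayed bound is $\ge\eps$. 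The constraints $|\beta_m|\le C$ and $\|f\|_\infty\le F$ only shrink the class, so they cannot lower the infimum and may be ignored.

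Part (ii) is the same argument on a fixed grid. In dimension one $\Fs(M,\infty)$ consists of continuous piecewise linear functions with knots on the dyadic grid $\{k2^{-(M+1)}\}$; in particular every candidate is affine on each cell of width $2^{-(M+1)}$, so the engine applies to it directly and yields
\begin{align*}
\inf_{g\in\Fs(M,\infty)}\|g-f\|_\infty\ \ge\ \frac{2^{-2(M+1)}}{8}\ =\ 2^{-2M-5}.
\end{align*}
This exceeds $\eps$ in the coarse-resolution regime: when $2^{M+5}\le 1/\sqrt\eps$ one has $\eps\le 2^{-2M-10}\le 2^{-2M-5}$, which is the claim. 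I note that continuity at the knots is irrelevant here, since the lower bound already follows from a single cell.

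For the upper bound (iii) I would invoke the multiplication network directly: writing $f(x)=x\cdot x$ and feeding $(x,x)\in[0,1]^2$ (the duplication absorbed into the first weight matrix) into $\Mult_N^2$ of Lemma \ref{mmult} gives $|\Mult_N^2(x,x)-x^2|\le 9\cdot2^{-N}$, so taking $N=\lceil\log_2(9/\eps)\rceil$ makes the error at most $\eps$; by Lemma \ref{mmult} the network then has depth $N+5=O(\log(1/\eps))$ and $O(N)=O(\log(1/\eps))$ parameters. Part (iv) is quoted from \cite{Prodmult} and needs no argument. I expect no real difficulty in any of this: the single substantive ingredient is the per-interval estimate $\ell^2/8$, and the only delicate point is the bookkeeping of constants---the $+1$ in the $M+1$ pieces for (i) and the factor $2^{-5}$ for (ii)---so that the bounds meet the exact thresholds $1/(6\sqrt\eps)$ and $2^{M+5}$.
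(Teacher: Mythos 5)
Your proof is correct and follows essentially the same route as the paper: your ``engine'' is exactly the paper's Lemma \ref{pwl} (proved there by an elementary midpoint/triangle-inequality contradiction rather than by citing Chebyshev equioscillation, but with identical content), followed by the same piece counting---at most $M+1$ affine pieces for a univariate MARS function, dyadic cells of width $2^{-(M+1)}$ for Faber--Schauder---and part (iii) is likewise read off from Lemma \ref{mmult} applied to $x\cdot x$. One substantive remark on part (ii): you silently replaced the stated hypothesis $2^{M+5}\geq 1/\sqrt{\eps}$ by $2^{M+5}\leq 1/\sqrt{\eps}$, and this is the right thing to do, because the statement as printed is false---for large $M$ the class $\Fs(M,\infty)$ contains the piecewise linear interpolant of $x^2$ on the grid of width $2^{-(M+1)}$, whose error is at most $2^{-2M-4}$, so the infimum cannot be bounded below by an $\eps$ satisfying only $2^{M+5}\geq 1/\sqrt{\eps}$; the paper's own proof, like yours, establishes precisely the ``$\leq$'' version, so the printed inequality direction is a typo that you were right to correct.
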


Part $(iii)$ follows directly from Lemma \ref{mmult}. For $(iv)$ one should notice that the number of hidden layers as defined in this work corresponds to $L-2$ in \cite{Prodmult}. For each of the methods, Theorem \ref{thm.lb1} allows us to find a lower bound on the required number of parameters to achieve approximation error at least $\eps.$ For MARS and Faber-Schauder class, we need at least of the order $\eps^{-1/2}$ many parameters whereas there exist DNNs with $O(\log(1/\ve))$ layers that require only $O(\log (1/\eps))$ parameters.

Parts $(i)$ and $(ii)$ rely on the following classical result for the approximation by piecewise linear functions, which for sake of completeness is stated with a proof. Similar results have been used for DNNs by \cite{Prodmult} and \cite{Liang2016}, Theorem 11.

\begin{lem}\label{pwl}
Denote by $\ca H(K)$ the space of piecewise linear functions on $[0,1]$ with at most $K$ pieces and put $f(x)=x^2.$ Then, 
\begin{align*}
	\inf_{h\in \ca H(K)} \| h- f\|_{\infty} \geq \frac{1}{8K^2}.
\end{align*}
\end{lem}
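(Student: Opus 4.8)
The plan is to establish the lower bound by a purely local argument: on any single linear piece of $h$, the approximation error to $f(x)=x^2$ is controlled from below by the curvature of $f$, and since there are only $K$ pieces, at least one of them must be long, forcing a large error there. First I would partition $[0,1]$ into the (at most $K$) subintervals on which $h$ is affine. By the pigeonhole principle, at least one such subinterval $[a,b]$ has length $b-a \geq 1/K$.

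Next I would reduce the problem to a single interval. On $[a,b]$, the function $h$ is affine, say $h(x) = \alpha x + \gamma$, and I would bound $\|h-f\|_\infty$ from below by $\sup_{x\in[a,b]}|x^2 - \alpha x - \gamma|$. The key elementary fact is that for a convex function the best affine (sup-norm) approximant on an interval leaves an error governed by the ``sag'' of the function relative to its chord. Concretely, for the quadratic $g(x) = x^2 - \alpha x - \gamma$ on $[a,b]$, the quantity $\sup_{x\in[a,b]} g(x) - \inf_{x\in[a,b]} g(x)$ is at least the gap between the value of $x^2$ at the midpoint and the average of its endpoint values, which for a quadratic equals the standard second-difference expression. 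Since the oscillation of any function exceeds half the difference of any two of its values, I would compute
\begin{align*}
\sup_{x\in[a,b]}|x^2-\alpha x-\gamma| \geq \frac{1}{2}\Big( \frac{a^2+b^2}{2} - \Big(\frac{a+b}{2}\Big)^2\Big) = \frac{(b-a)^2}{8}.
\end{align*}
The middle equality is the exact second difference of $x^2$, which is independent of $\alpha$ and $\gamma$ precisely because $x^2$ is quadratic, so the affine part cancels.

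Combining the two steps gives the result: choosing the long interval with $b-a \geq 1/K$ yields $\|h-f\|_\infty \geq (b-a)^2/8 \geq 1/(8K^2)$, and taking the infimum over $h \in \ca H(K)$ preserves the inequality. I do not expect a genuine obstacle here; the only point requiring a little care is the observation that the lower bound on a single piece is completely insensitive to the slope and intercept of the affine approximant, which is what makes the second-difference computation clean. The one bookkeeping subtlety is ensuring the ``at most $K$ pieces'' hypothesis is used correctly to guarantee a piece of length at least $1/K$, which follows immediately since the pieces partition an interval of total length one.
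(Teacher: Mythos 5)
Your proof is correct and is essentially the paper's argument: both locate, by pigeonhole, a subinterval $[a,b]$ of length at least $1/K$ on which $h$ is affine, and both exploit the fact that the second difference $\tfrac{a^2+b^2}{2}-\bigl(\tfrac{a+b}{2}\bigr)^2=\tfrac{(b-a)^2}{4}$ is unaffected by the affine part, losing a factor $2$ from evaluating at the three points $a$, $b$, $\tfrac{a+b}{2}$. The only difference is presentational — the paper argues by contradiction via the triangle inequality at the midpoint, while you give the same estimate directly — so no further comment is needed.
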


\begin{proof}[Proof of Lemma \ref{pwl}]
We prove this by contradiction assuming that there exists a function $h^*\in \ca H(K)$ with $\| h^*- f\|_\infty < 1/(8K^2).$ Because $h^*$ consists of at most $K$ pieces, there exist two values $0\leq a < b \leq 1$ with $b-a\geq 1/K$ such that $h^*$ is linear on the interval $[a,b].$ Using triangle inequality,
\begin{align*}
	\Big | h^*\Big(\frac{a+b}{2}\Big) - \Big(\frac{a+b}{2}\Big)^2 \Big| 
	&= \Big | \frac{h^*(a)+h^*(b)}{2} - \Big(\frac{a+b}{2}\Big)^2 \Big| \\
	&\geq \Big(\frac{a-b}{2}\Big)^2- \frac 12 |h^*(a) - a^2| - \frac 12 |h^*(b) - b^2| \\
	&> \frac 1{4K^2}- \frac 1{8K^2} =  \frac 1{8K^2}.
\end{align*}
This is a contradiction and the assertion follows.
\end{proof}

{\it Proof of Theorem \ref{thm.lb1} $(i)$ and $(ii):$} A univariate MARS function $f\in\Fm(M,C)$ is piecewise linear with at most $M+1$ pieces.  From Lemma \ref{pwl} it follows that $\sup_{x\in [0,1]} |f(x)-x^2|\geq1/(8(M+1)^2)\geq 1/(36 M^2)$. This yields $(i).$ To prove $(ii)$, observe that a univariate Faber-Schauder function $g\in\Fs(M,C)$ is piecewise linear with at most $2^{M+2}$ pieces. $(ii)$ thus follows again from Lemma \ref{pwl}.

{\bf The case $\bs{f(x_1,x_2) =(x_1+x_2-1)_+:}$}  This function can be realized by a neural network with one layer and four parameters. 
Multivariate MARS and Faber-Schauder functions are, however, built by tensorization of univariate functions. They are therefore not well-suited if the target function includes linear combinations over different inputs. Depending on the application this can indeed be a big disadvantage. To describe an instance where this happens, consider a dataset with input being all pixels of an image. The machine learning perspective is that a canonical regression function depends on characteristic features of the underlying object such as edges in the image. Edge filters are discrete directional derivatives and therefore linear combinations of the input. This can easily be realized by a neural network but requires far more parameters if approximated by MARS or the Faber-Schauder class.

\begin{lem}\label{aproxf}
For $f(x_1,x_2) =(x_1+x_2-1)_+$ on $[0,1]^2,$
\begin{enumerate}
\item $\displaystyle \quad\inf_{h\in\Fm(M,\infty)}\|f-h\|_\infty\geq\frac{1}{8(M+1)}$;
\item $\displaystyle\quad\inf_{h\in\Fs(M,\infty)}\|f-h\|_\infty\geq\frac{1}{8\, I}.$
\end{enumerate}

\end{lem}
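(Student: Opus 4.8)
The plan is to reduce both bounds to a one-dimensional argument by restricting $f$ and any candidate approximant $h$ to a horizontal slice $\{x_2=y\}$, chosen so that the diagonal kink of $f$ lands exactly where $h$ is forced to be affine. The crucial observation is that although $f(x_1,x_2)=(x_1+x_2-1)_+$ has its singularity along the non-axis-aligned line $x_1+x_2=1$, restricting to a horizontal line keeps every term of a MARS or Faber-Schauder function piecewise linear in $x_1$: a bivariate tensor term $u(x_1)v(x_2)$ becomes $v(y)\,u(x_1)$, a constant multiple of the univariate ramp $u$, so no quadratic pieces appear. This is precisely why a horizontal slice works whereas a diagonal one would not.

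First I would count the $x_1$-kinks of the slice $x_1\mapsto h(x_1,y)$. For $h\in\Fm(M,\infty)$ each of the $M$ summands contributes at most one kink in $x_1$ (at its shift $t_1$; univariate $x_2$-terms and the constant contribute none), so $h(\cdot,y)$ is piecewise linear with at most $M$ interior kinks, hence at most $M+1$ affine pieces, independently of $y$ and of the (unbounded) coefficients. These kink abscissae partition $[0,1]$ into at most $M+1$ intervals, one of which, say $[\alpha,\beta]$, has length $\beta-\alpha\ge 1/(M+1)$. For $h\in\Fs(M,\infty)$ the same holds, but the $x_1$-kinks can only sit on the dyadic grid $\{k2^{-(M+1)}\}$, so $h(\cdot,y)$ has at most $2^{M+1}\le I$ pieces and every cell has length $2^{-(M+1)}\ge 1/I$.

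Next I place the kink of $f$. Since $f(x_1,y)=(x_1-(1-y))_+$, choosing $y=1-c$ with $c=(\alpha+\beta)/2$ the midpoint of the chosen interval puts the target's kink at the center of a piece on which $h(\cdot,y)$ is affine. Writing $a=(\beta-\alpha)/2$ and evaluating at $P_1=(\alpha,y)$, $P_2=(c,y)$, $P_3=(\beta,y)$, the second difference of $f$ is $f(P_1)-2f(P_2)+f(P_3)=0-0+a=a$, whereas affineness gives $h(P_1)-2h(P_2)+h(P_3)=0$. Hence $a=\big|(f-h)(P_1)-2(f-h)(P_2)+(f-h)(P_3)\big|\le 4\|f-h\|_\infty$, so $\|f-h\|_\infty\ge a/4=(\beta-\alpha)/8$. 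This yields $\ge\frac{1}{8(M+1)}$ in the MARS case and $\ge\frac{1}{8\cdot 2^{M+1}}\ge\frac{1}{8I}$ in the Faber-Schauder case, and since the infimum is over all admissible $h$, the two claims follow.

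The main obstacle, and the step that must be argued with care, is the structural claim that the horizontal restriction is genuinely piecewise linear with the stated, coefficient-free bound on the number of pieces; once this is in hand the second-difference estimate is routine. In particular one must verify that freezing $x_2=y$ collapses each tensor product to a scalar multiple of a univariate ramp, so that no new breakpoints in $x_1$ and no curvature are created, and that the piece count depends only on the admissible shift/knot locations and not on the magnitudes of the coefficients, which is exactly what makes the estimate valid over the unbounded classes $\Fm(M,\infty)$ and $\Fs(M,\infty)$.
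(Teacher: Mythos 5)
Your proof is correct and takes essentially the same route as the paper's: restrict to a horizontal slice (on which the kink locations of any $h\in\Fm(M,\infty)$ or $h\in\Fs(M,\infty)$ in the $x_1$-variable are fixed, independently of $x_2$ and of the coefficients), pick an interval of length at least $1/(M+1)$ (resp.\ $2^{-M-1}$) on which the slice is affine, choose $x_2$ so that the kink of $f(\cdot,x_2)$ sits at its midpoint, and conclude that the error is at least one eighth of the interval length. The only difference is presentational: you make explicit the second-difference inequality $\bigl|(f-h)(P_1)-2(f-h)(P_2)+(f-h)(P_3)\bigr|\le 4\|f-h\|_\infty$, which the paper leaves implicit in its final display.
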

\begin{proof}[Proof of Lemma \ref{aproxf}] \textit{(i)}
Let $h\in\Fm(M,\infty)$. The univariate, piecewise linear function $h(\cdot,x_2):[0,1]\rightarrow \R$ has at most $M$ kinks, where the locations of the kinks do not depend on $x_2$ (some kinks might, however, vanish for some $x_2$). Thus, there exist numbers $0\leq a<b\leq 1$ with $b-a\geq1/(M+1)$, such that $h(\cdot,x_2)$ is linear in $[a,b]$ for all $x_2\in[0,1]$. Fix $x_2=1-(a+b)/2$, such that $f(\cdot,x_2)=\sigma(\cdot-(a+b)/2).$ In particular, $f(\cdot,x_2)$ is piecewise linear in $[a,b]$ with one kink, and $f(a,x_2)=f((a+b)/2,x_2)=0,$ $f(b,x_2)=(b-a)/2.$ Therefore,
\begin{align*}
\sup_{x_1\in[a,b]}\big|f(x_1,x_2)-h(x_1,x_2)\big|\geq\frac{b-a}{8} \geq \frac{1}{8(M+1)}.
\end{align*}
\textit{(ii)}
Similar to \textit{(i)} with $a=0$ and $b=2^{-M-1}$.
\end{proof}

\section{Risk comparison for nonparametric regression}\label{risk}
In nonparametric regression with random design the aim is to estimate the dependence of a response variable $\mathrm Y\in\R$ on a random vector $\bX\in[0,1]^d.$ The dataset consists of $n$ independent copies $(\mathrm Y_i,\bX_i)_{i=1,\ldots,n}$ generated from the model
\begin{align}\label{regmod}
 \mathrm Y_i=f_0(\bX_i)+\ve_i, \quad i=1, \ldots, n.
\end{align}
We assume that $f_0:[0,1]^d\rightarrow[0,F]$ for some $F>0$ and that the noise variables $\ve_i$ are independent, standard normal and independent of $(\bX_i)_i$. An estimator or reconstruction method will typically be denoted by $\widehat f.$ The performance of an estimator is measured by the prediction risk
\begin{align*}
	R(\widehat{f},f_0):=\E\big[\big(\widehat{f}(\bX)-f_0(\bX)\big)^2\big],
\end{align*}
with $\bX$ having the same distribution as $\bX_1$ but being independent of the sample $(Y_i,\bX_i)_i$.

The cross-entropy in this model coincides with the least-squares loss, see \cite{Goodfellow-et-al-2016}, p.129.
A common property of all considered methods in this article is that they aim to find a function in the respective function class minimizing the least squares loss. Gradient methods for network reconstructions will typically get stuck in a local minimum or a flat saddle point. We define the following quantity that determines the expected difference between the loss induced by a reconstruction method $\widehat f$ taking values in a class of networks $\mathcal{F}$ and the empirical risk minimizer over $\mathcal{F},$
\begin{align}
	\Delta_n(\widehat f_n, f_0, \mathcal{F}):= \E\Big[\frac 1n \sum_{i=1}^n(Y_i -\widehat f_n (\bX_i))^2 - \argmin_{f\in\mathcal{F}} \frac 1n \sum_{i=1}^n(Y_i -f(\bX_i))^2 \Big].
	\label{eq.relaxed_ERM}
\end{align}
The quantity $\Delta_n(\widehat f_n, f_0, \mathcal{F})$ allows us to separate the statistical risk induced by the selected optimization method. Obviously $\Delta_n(\widehat f_n, f_0, \mathcal{F})\geq 0$ and $\Delta_n(\widehat f_n, f_0, \mathcal{F})= 0$ if $\widehat f_n$ is the empirical risk minimizer over $\mathcal{F}.$ For convenience, we write $\Delta_n:= \Delta_n\big(\widehat f_n, f_0,  \ca F(L, \bs p, s) \big).$

\begin{thm}\label{errm}
Consider the $d$-variate nonparametric regression model \eqref{regmod} with bounded regression function $f_0$. For any positive integer $M,$ and any constant $C,$ there exists a neural network class $\ca F(L, \bs p, s)$ with $L=O(\log Mn),$ $\max_\ell p_\ell =O(M)$ and $s=O(M\log Mn),$ such that for any neural network estimator $\widehat f$ with values in the network class $\ca F(L, \bs p, s)$ and any $0< \gamma \leq 1,$
\begin{align}\label{maap}
R(\widehat{f},f_0)\leq 
	&(1+\gamma)^2 \Big(  \inf_{f\in\Fm(M,C)}\E\big[\big(f(\bX)-f_0(\bX)\big)^2\big]+\Delta_n\Big)  \notag \\
	&+O\Big(\frac{M\log^2 (Mn)\log (M+1)}{n\gamma}\Big).
\end{align}
\end{thm}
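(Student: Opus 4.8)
The goal is to transfer a known oracle-type risk bound for empirical risk minimization over the neural network class $\ca F(L,\bs p,s)$ into a comparison with the best MARS approximation. The plan is to use a generic oracle inequality for (possibly non-exact) empirical risk minimizers over a bounded network class—of the kind established in the DNN regression literature—which bounds $R(\widehat f, f_0)$ by $(1+\gamma)$ times the best approximation error within $\ca F(L,\bs p,s)$ plus $\Delta_n$ plus a complexity penalty controlled by the covering entropy of the class. First I would invoke such an oracle inequality: for any $\gamma\in(0,1]$,
\begin{align*}
R(\widehat f, f_0) \leq (1+\gamma)\Big( \inf_{h\in\ca F(L,\bs p,s)} \E\big[(h(\bX)-f_0(\bX))^2\big] + \Delta_n \Big) + \frac{C'}{\gamma}\,\frac{(s+1)\log\big(n \cdot \#\text{params}\big)}{n},
\end{align*}
where the entropy term scales like $s$ times the logarithm of the inverse covering radius. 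This is the standard route and I would cite or reproduce it rather than rederive it.

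Next I would connect the best network approximation to the best MARS approximation. Given any $f\in\Fm(M,C)$, Lemma~\ref{3} produces a network $h\in\ca F(L,\bs p,s)$ with $L=O(\log(MC/\ve))$, $\max_\ell p_\ell=O(M)$, $s=O(M\log(MC/\ve))$ and $\|f-h\|_\infty\leq\ve$. Choosing the error level as $\ve=1/n$ (so that $L=O(\log Mn)$ and $s=O(M\log Mn)$ as claimed, absorbing the fixed $C$ into the constants) gives
\begin{align*}
\inf_{h\in\ca F(L,\bs p,s)}\E\big[(h(\bX)-f_0(\bX))^2\big] \leq \inf_{f\in\Fm(M,C)}\E\big[(f(\bX)-f_0(\bX))^2\big] + O(\ve) + O(\ve^2),
\end{align*}
using $\|f-h\|_\infty\leq\ve$ together with the inequality $(a+b)^2\leq(1+\gamma)a^2+(1+\gamma^{-1})b^2$ applied to $a=f-f_0$ and $b=h-f$. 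This introduces a second factor of $(1+\gamma)$, producing the $(1+\gamma)^2$ prefactor in \eqref{maap}, and contributes an error of order $\ve/\gamma = 1/(n\gamma)$, which is negligible relative to the stated remainder.

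The remaining task is to verify that the complexity penalty from the oracle inequality matches the stated $O\big(M\log^2(Mn)\log(M+1)/(n\gamma)\big)$. Substituting $s=O(M\log Mn)$ and the number of parameters (polynomial in the width $O(M)$, depth $O(\log Mn)$, hence $\log$ of params is $O(\log(Mn)+\log(M+1))$) into $\frac{s\log(\#\text{params}\cdot n)}{n}$ yields $\frac{M\log(Mn)\cdot\log(Mn)}{n}$, and the extra $\log(M+1)$ factor arises because the entropy bound for this architecture carries the logarithm of the width $O(M)$; I would track this factor explicitly through the covering-number estimate for $\ca F(L,\bs p,s)$. The main obstacle is precisely this bookkeeping: ensuring the depth, width, sparsity, and parameter-count bounds from Lemma~\ref{3} feed correctly into whichever covering-number / entropy estimate underlies the oracle inequality, so that all three logarithmic factors come out as stated rather than an overcounted power of $\log(Mn)$. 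Once the oracle inequality's remainder is expressed in terms of $(L,\bs p,s)$, the rest is substitution and collecting the $(1+\gamma)^2$ terms.
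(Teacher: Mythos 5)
Your proposal follows essentially the same route as the paper's proof: the paper combines Lemma \ref{3} with approximation error $\ve=1/n$, the covering-number bound of Lemma \ref{cov} (giving $\log V\lesssim L\log(M+1)$ and hence the three logarithmic factors), and the oracle inequality of Lemma \ref{approx}, exactly as you outline. The only cosmetic difference is that the paper's oracle inequality already carries the $(1+\gamma)^2$ prefactor, so the transfer from $\inf_{h\in\ca F(L,\bs p,s)}$ to $\inf_{f\in\Fm(M,C)}$ requires no weighted Young inequality---the cross term, of order $F/n$ by boundedness, is simply absorbed into the $O\big(M\log^2(Mn)\log(M+1)/(n\gamma)\big)$ remainder.
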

Theorem \ref{errm} gives an upper bound for the risk of the neural network estimator in terms of the best approximation over all MARS functions plus a remainder. The remainder is roughly of order $\Delta_n + M/n$, where $M$ is the maximal number of basis functions in $\Fm(M,C).$ If $M$ is small the approximation term dominates. For large $M$, the remainder can be of larger order. This does, however, not imply that neural networks are outperformed by MARS. It is conceivable that the risk of the MARS estimator also include a variance term that scales with the number of parameters over the sample size and is therefore of order $M/n.$ To determine an $M$ that balances both terms is closely related to the classical bias-variance trade-off and depends on properties of the true regression function $f.$ For a result of the same flavor, see Theorem 1 in \cite{zhang2017}. A similar result also holds for the Faber-Schauder approximation.

\begin{thm}\label{errfs}
Consider the $d$-variate nonparametric regression model \eqref{regmod} with bounded regression function $f_0$. For any positive integer $M,$ and any constant $C,$ there exists a neural network class $\ca F(L, \bs p, s)$ with $L=O(\log In),$ $\max_\ell p_\ell =O(I)$ and $s=O(I\log In),$ such that for any neural network estimator $\widehat f$ with values in the network class $\ca F(L, \bs p, s)$ and any $0< \gamma \leq 1,$
\begin{align*}
R(\widehat{f},f_0)\leq& (1+\gamma)^2\Big( \inf_{f\in\Fs(M,C)}\E\big[\big(f(\bX)-f_0(\bX)\big)^2\big]+\Delta_n\Big)+O\Big(\frac{{I}\,\log^2(In )\log {(I+1)}}{n\gamma}\Big).
\end{align*}
\end{thm}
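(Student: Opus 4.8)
The plan is to mirror the proof of Theorem \ref{errm}, replacing the MARS approximation of Lemma \ref{3} by the Faber--Schauder approximation of Lemma \ref{20}, so that the quantity $I=(1+2^M)^d$ takes over the role played by $M$ in the MARS case. Concretely, I would start from the oracle inequality for approximate least-squares estimators over a sparse ReLU network class $\ca F(L,\bs p,s)$ that underlies Theorem \ref{errm}. For any $0<\gamma\le 1$ this inequality bounds the prediction risk by
\[
R(\widehat f,f_0)\le (1+\gamma)^2\Big(\inf_{h\in\ca F(L,\bs p,s)}\E\big[(h(\bX)-f_0(\bX))^2\big]+\Delta_n\Big)+ \text{(remainder)},
\]
where the remainder is of order $\frac{\log \ca N}{n\gamma}$ with $\log\ca N$ the metric entropy of $\ca F(L,\bs p,s)$ at resolution $\sim 1/n$, plus negligible lower-order terms coming from the boundedness of $f_0$ and the discretization of the network weights.

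Next I would specify the architecture. Applying Lemma \ref{20} with $\ve \asymp 1/n$ produces a class $\ca F(L,\bs p,s)$ with $L=O(\log(In))$, $\max_\ell p_\ell=O(I)$ and $s=O(I\log(In))$ such that every $f\in\Fs(M,C)$ is approximated in sup-norm to error $1/n$ by some $h_f$ in the class. For this architecture the metric entropy satisfies $\log\ca N=O\big(sL\log(\max_\ell p_\ell)\big)=O\big(I\log^2(In)\log(I+1)\big)$, which is exactly the numerator of the stated remainder. The approximation term is handled by the sup-to-$L^2$ conversion: since all functions are bounded by $F$, the bound $\|h_f-f\|_\infty\le 1/n$ gives $\E[(h_f(\bX)-f_0(\bX))^2]\le \E[(f(\bX)-f_0(\bX))^2]+O(1/n)$, so taking the infimum over $f\in\Fs(M,C)$ bounds the network approximation error by $\inf_{f\in\Fs(M,C)}\E[(f(\bX)-f_0(\bX))^2]+O(1/n)$. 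The extra $O(1/n)$, after multiplication by $(1+\gamma)^2=O(1)$, is dominated by the remainder and thus absorbed.

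An equivalent and slightly shorter route is to invoke Theorem \ref{errm} directly through the embedding $\Fs(M,C)\subseteq\Fm(3^dI,2^{d(M+2)/2}C)$ of Lemma \ref{lem.FS_embedding}: with $M'=3^dI=O(I)$ and $C'=2^{d(M+2)/2}C$ one has $\log C'=O(M)=O(\log I)$ for fixed $d$ (since $2^M\asymp I^{1/d}$), so the coefficient growth enters only logarithmically. Theorem \ref{errm} then yields $L=O(\log(In))$, $\max_\ell p_\ell=O(I)$, $s=O(I\log(In))$, remainder $O\big(\frac{I\log^2(In)\log(I+1)}{n\gamma}\big)$, and approximation term bounded via $\inf_{f\in\Fm(M',C')}\E[(f-f_0)^2]\le\inf_{f\in\Fs(M,C)}\E[(f-f_0)^2]$, which is precisely the claim.

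I expect the main obstacle to be the bookkeeping in the remainder: one must verify that, with $\ve\asymp 1/n$, the covering number bound decomposes as $s$ (contributing a factor $I\log(In)$) times $L$ (a factor $\log(In)$) times $\log(\max_\ell p_\ell)$ (a factor $\log(I+1)$), and that neither the logarithmic weight discretization nor the exponential coefficient growth $2^{d(M+2)/2}$ from Lemma \ref{lem.FS_embedding} inflates the order beyond $I\log^2(In)\log(I+1)$. Everything else---the oracle inequality, the sup-to-$L^2$ passage, and the absorption of the $(1+\gamma)^2$ factor---is routine and identical to the MARS case.
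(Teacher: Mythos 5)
Your proposal is correct and takes essentially the same approach as the paper: the paper proves Theorem \ref{errfs} precisely by repeating the proof of Theorem \ref{errm} with Lemma \ref{20} in place of Lemma \ref{3}, i.e., with $I$ substituted for $M$ in the architecture, entropy, and oracle bounds, which is your first route. Your ``shorter'' second route via Lemma \ref{lem.FS_embedding} is not genuinely different either, since Lemma \ref{20} is itself obtained from Lemma \ref{3} through that same embedding, and your check that $\log C' = O(\log I)$ keeps the coefficient growth logarithmic is exactly the bookkeeping that derivation requires.
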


\section{Finite sample results}\label{FSR}
This section provides a simulation-based comparison of DNNs, MARS and expansion with respect to the Faber-Schauder system. We also consider higher-order MARS (HO-MARS) using the function system \eqref{MARS2}. In particular, we study frameworks in which the higher flexibility of networks yields much better reconstructions compared to the other approaches. For MARS we rely on the py-earth package in Python. DNNs are fitted using the Adam optimizer in Keras (Tensorflow backend) with learning rate $0.001$ and decay $0.00021.$ 

As it is well-known the network performance relies heavily on the initialization of the network parameters. In the settings below, several standard initialization schemes resulted in suboptimal performance of deep networks compared with the spline based methods. One reason is that the initialization can 'deactivate' many units. This means that we generate $\mathbf{w},v$ such that for every incoming signal $\mathbf{x}$ occurring for a given dataset, $(\mathbf{w}^t\mathbf{x} +v)_+=0.$ The gradients for $\mathbf{w},v$ are then zero as well and the unit remains deactivated for all times reducing the expressiveness of the learned network. For one-dimensional inputs with values in $[0,1],$ for instance, the $j$-th unit in the first layer is of the form $(w_j x+v_j)_+.$ The unit has then the kink point in $x=-v_j/w_j.$ The unit is deactivated, in the sense above, if $-v_j/w_j > 1$ and $w_j>0$ or if $-v_j/w_j <0$ and $w_j<0.$ To increase the expressiveness of the network the most desirable case is that the kink falls into the interval $[0,1].$ In view of the inequalities above this requires that $v_j$ and $w_j$ do not have the same sign and $|w_j|\geq |v_j|.$ A similar argument holds also for the deeper layers.

The Glorot/Xavier uniform initializer suggested in \cite{pmlr-v9-glorot10a} initializes the network by setting all shifts/biases to zero and sampling all weights on layer $\ell$ independently from a uniform distribution $U[-b_\ell, b_\ell]$ with $b_\ell=\sqrt{6/(m_{\ell-1}+m_\ell)}$ and $m_\ell$ the number of nodes in the $\ell$-th layer. This causes the kink to be at zero for all units. To distribute the kink location more uniformly, we also sample the entries of the $\ell$-th shift vector independently from a uniform distribution $U[-b_\ell, b_\ell].$

It is also possible to incorporate shape information into the network initialization. If the weights are generated from a uniform distribution $U[0, b_\ell],$ all slopes are non-negative. This can be viewed as incorporating prior information that the true function increases at least linearly. In view of the argument above, it is then natural to sample the biases from $U[-b_\ell, 0].$ We call this the increasing Glorot initializer.

As described above, it might still happen that the initialization scheme deactivates many units, which might lead to extremely bad reconstructions. For a given dataset, we can identify these cases because of their large empirical loss. As often done in practice, for each dataset we re-initialize the neural network several times and pick the reconstruction with the smallest empirical loss.

{\bf The case $\bs{f_0(x) =x^2:}$} In the previous section, we have seen that MARS and Faber-Schauder {class} have worse approximation properties for the function $f_0(x)=x^2$ if compared with DNNs. It is therefore of interest to study this as a test case with simulated independent observations from the model
\begin{align}
\mathrm Y_i=&\mathrm X_i^2+\ve_i,\quad \mathrm X_i\sim \ca U[0,1],\;\ve_i\sim\ca N(0,0.2),\quad i=1,\hdots,1000.\label{sim2}
\end{align}
Since $f_0(x)=x^2$ is smooth, a good approximation should already be provided for a relatively small number of parameters in any of the considered function {classes}. Thus, we set $M=10$ for MARS and $K=3,M=10$ for higher order MARS allowing for polynomials of degree up to three. For the Faber-Schauder {class} $\ca F_{FS}(M,C)$ we set $M=3$ meaning that details up to size $2^{-3}$ can be resolved. For the neural network reconstruction, we consider dense networks with three hidden layers and five units in each hidden layer. The network parameters are initialized using the increasing Glorot initializer described above. The simulation results are summarized in Table \ref{Tabelle1}, see also Figure \ref{figure1}. Concerning the prediction risk, all methods perform equally well, meaning that the effect of the additive noise  in model \eqref{sim2} dominates the approximation properties of the function {classes}. We find that higher order MARS often yields a piecewise linear approximation of the regression function and does not include any quadratic term. 

\begin{table}[ht]
\vspace{0.2cm}
\begin{center}
{\small
\begin{tabular}{cccc}
MARS& HO-MARS &FS & DNN \\
$3.99\cdot10^{-2} \ \ (2\cdot10^{-3})$ &$3.98\cdot10^{-2} \ \ (2\cdot10^{-3})$ &$3.98\cdot10^{-2} \ \ (2\cdot10^{-3})$&$4.04\cdot10^{-2} \ \ (2\cdot10^{-3})$\\
$2.89\cdot10^{-5} \ \ (2 \cdot10^{-6} )$&$<10^{-26} \ \ \  (<10^{-25})$&$1.34\cdot10^{-6} \ \ (4\cdot10^{-8})$&$1.36\cdot10^{-6} \ \ (4\cdot10^{-7})$\\[-0.5cm]
\end{tabular}
}
\end{center}
\caption{{\it Average prediction risks in models \eqref{sim2} (upper) and \eqref{sim1} (lower) based on 100 repetitions. Standard deviations in brackets. }}\label{Tabelle1}
\end{table}

\begin{figure}[ht]
\begin{center}
\includegraphics[width=0.95\textwidth]{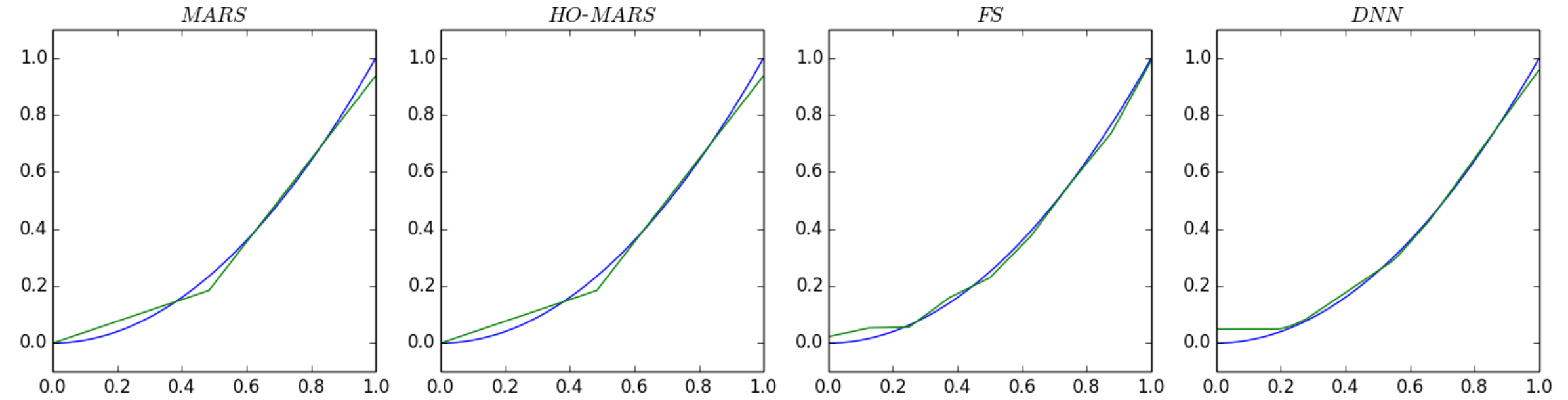}
\caption{{\it Reconstructions (green) in model \eqref{sim2}. From left to right: MARS reconstruction with $M=10;$ higher order MARS reconstruction with $K=3,M=10;$ Faber-Schauder reconstruction with $M=3$; neural network reconstruction with architecture $L=3$, $\bs p=(1,5,5,5,1).$ The regression function $f_0$ is depicted in blue.}}\label{figure1}
\end{center}
\vspace{-0.7cm}
\end{figure}

It is natural to ask about the effect of the noise on the reconstruction in model \eqref{sim2}. We therefore also study the same problem without noise, that is,
\begin{align}
\mathrm Y_i=&\mathrm X_i^2,\quad\mathrm X_i\sim \ca U[0,1] ,\quad i=1,\hdots,1000.\label{sim1}
\end{align} 
Results are summarized in Table \ref{Tabelle1}. As the underlying function class contains the square function, higher order MARS finds in this case the exact representation of the regression function. MARS, Faber-Schauder and neural networks result in a much smaller prediction risk if compared to the case with noise. DNNs and Faber-Schauder series estimation are of comparable quality while MARS performs worse.

The theory suggests that for squared regression function, DNNs should also perform better than Faber-Schauder. We believe that DNNs do not find a good local minimum in most cases explaining the discrepancy between practice and applications. For further improvements of DNNs a more refined network initialization scheme is required.

{\bf The case $\bs{f_0(x)=\sin(5\cdot 2\pi x):}$} To compare the different methods for rough functions, we simulate independent observations from
\begin{align}
\mathrm Y_i=\sin(5\cdot 2\pi \mathrm X_i),\quad \mathrm X_i\sim \ca U[0,1],\quad i=1,\hdots,1000.\label{sim3}
\end{align}
For such highly oscillating functions, all methods require many parameters. Therefore we study MARS with $M=50$, higher order MARS with $K=5$ and $M=50$, the Faber-Schauder {class} with $M=6$ and fully-connected networks with ten hidden layers, network width vector $\bs p=(1,10,\hdots,10,1)$ and Glorot uniform initializer. Results are displayed in Table \ref{Tabelle2} and Figure \ref{figure2}. Intuitively, fitting functions generated by a local function system seems to be favorable if the truth is highly oscillating and the Faber-Schauder series expansion turns out to perform best for this example.

\begin{table}[ht]
\begin{center}
{\small
\begin{tabular}{cccc}
MARS& HO-MARS & FS & DNN \\
$2.25\cdot10^{-3} \ \ (4\cdot10^{-4})$ &$9.70\cdot10^{-4}  \ \ (4\cdot10^{-4})$ &$3.94\cdot10^{-5} \ \ (2 \cdot10^{-6})$&$4.67\cdot10^{-4} \ \ (2\cdot10^{-4})$\\
$6.06\cdot10^{-4} \ \ (4\cdot10^{-4})$&$2.64\cdot10^{-4} \ \ (2\cdot10^{-4})$&$2.08\cdot10^{-3} \ \ (5\cdot10^{-4})$&$5.05\cdot10^{-4} \ \ (3\cdot10^{-4})$\\[-0.5cm]
\end{tabular}
}
\end{center}
\caption{{\it Average prediction risks in models \eqref{sim3} (upper) and \eqref{sim4} (lower) based on 100 repetitions. Standard deviations in brackets. }}\label{Tabelle2}
\end{table}
\begin{figure}[ht]
\begin{center}
\includegraphics[width=0.9\textwidth]{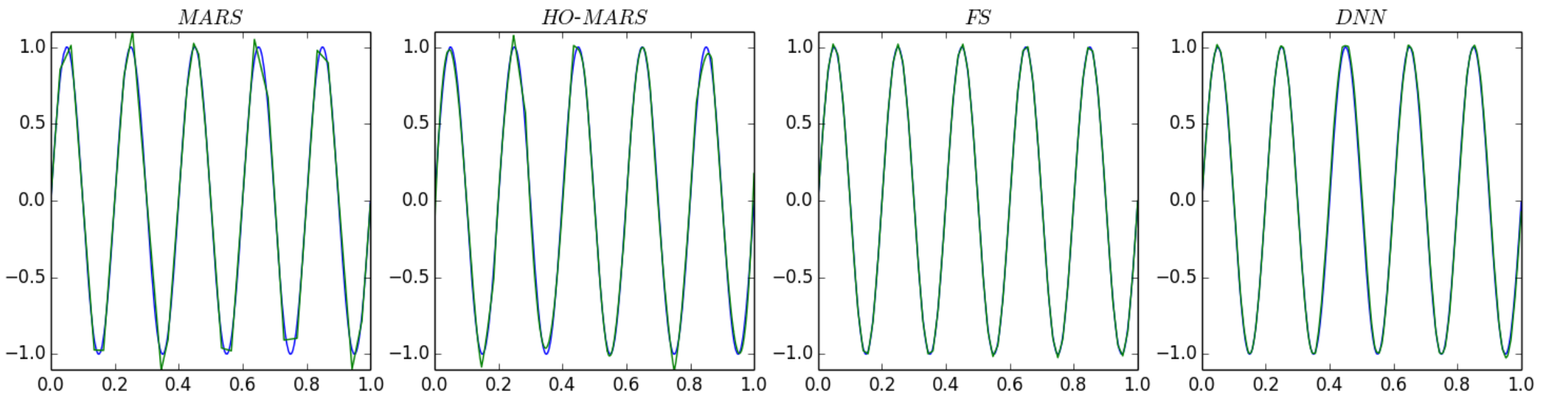}
\includegraphics[width=0.9\textwidth]{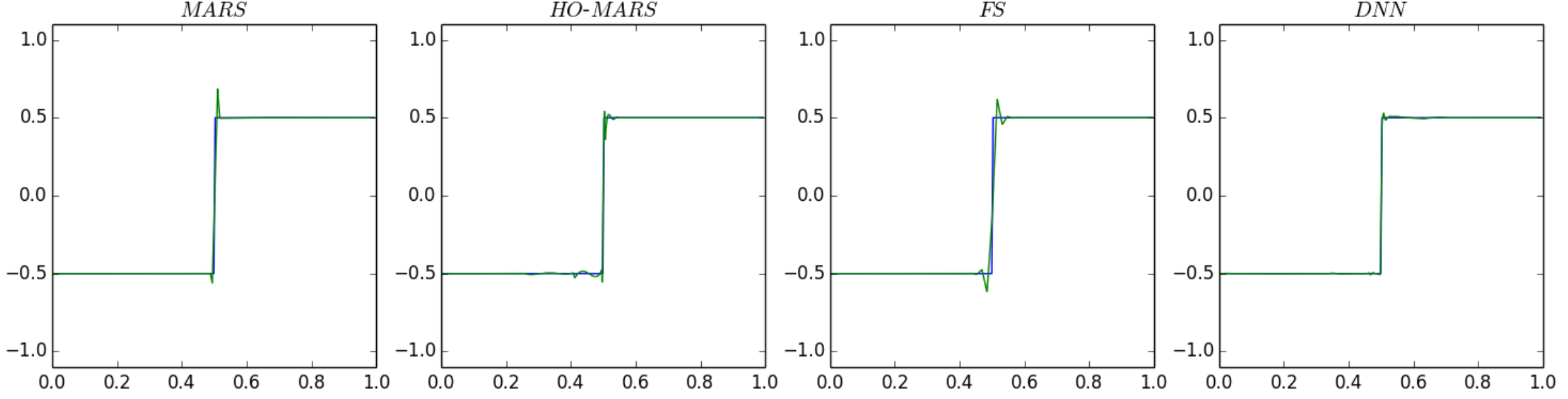}
\caption{{\it Reconstructions (green) in model \eqref{sim3} (upper) and model \eqref{sim4} (lower). From left to right: MARS reconstruction with $M=50;$ higher order MARS reconstruction with $K=5,M=50;$ Faber-Schauder reconstruction with $M=6$; neural network reconstruction with architecture $L=10$, $\bs p=(1,10,\hdots,10,1).$ $f_0$ is depicted in blue.}}
\label{figure2}
\end{center}
\vspace{-0.7cm}
\end{figure}
 
{\bf The case $\bs{f_0(x)=-1/2+\mathbbm{1}(x\geq 1/2):}$} As another toy example we consider regression with a jump function to investigate the approximation performance in a setup in which standard smoothness assumptions are violated. For this we generate independent observations from
 \begin{align}
\mathrm Y_i=\begin{cases} -\frac{1}{2},& \tn{if}\quad \mathrm X_i<\frac{1}{2},\\
\;\;\;\frac{1}{2},& \tn{if} \quad \mathrm X_i\geq\frac{1}{2},\end{cases}\quad \mathrm X_i\sim \ca U[0,1], \quad i=1,\hdots,1000.\label{sim4}
\end{align}
We use the same choices of parameters as in the simulation before. Results are shown in Table \ref{Tabelle2} and Figure \ref{figure2}, lower parts. Compared with the other methods, the neural network reconstruction does not add spurious artifacts around the jump location.

{\bf The case $\bs{f_0(x_1,x_2)=(x_1+x_2-1)_+:}$} As shown in Lemma \ref{aproxf}, DNNs need much fewer parameters than MARS or the Faber-Schauder class for this function. We generate independent observations from 
\begin{align}
\mathrm Y_i=(\mathrm X_{i,1}+\mathrm X_{i,2}-1)_+,\quad \bX_i=(\mathrm X_{i,1},\mathrm X_{i,2})^\top\sim \ca U[0,1]^2,\quad i=1,\hdots,1000.\label{sim5}
\end{align}
Notice that the regression function $f_0$ is piecewise linear with change of slope along the line $x_1+x_2=1$, $\bs x=(x_1,x_2)^\top\in[0,1]^2$.  For convenience we only consider the more flexible higher order MARS. The prediction risks reported in Table \ref{Tabelle4} and the reconstructions displayed in Figure \ref{figure4} show that in this problem neural networks outperform higher order MARS and reconstruction with respect to the Faber-Schauder {class} even if we allow for more parameters. 
\begin{table}[ht]
\begin{center}
{\small 
\begin{tabular}{ccc}
HO-MARS & FS & DNN \\
$2.20\cdot10^{-4} \ \ (7\cdot10^{-5})$&$9.86\cdot10^{-5} \ \ (9\cdot10^{-6})$&$2.46\cdot10^{-7} \ \ (1\cdot10^{-6})$\\[-0.5cm]
\end{tabular}
}
\end{center}\caption{{\it Average prediction risks in model \eqref{sim5} based on 100 repetitions. Standard deviations in brackets. }}\label{Tabelle4}
\end{table}
\begin{figure}[ht]
\begin{center}
\includegraphics[width=\textwidth]{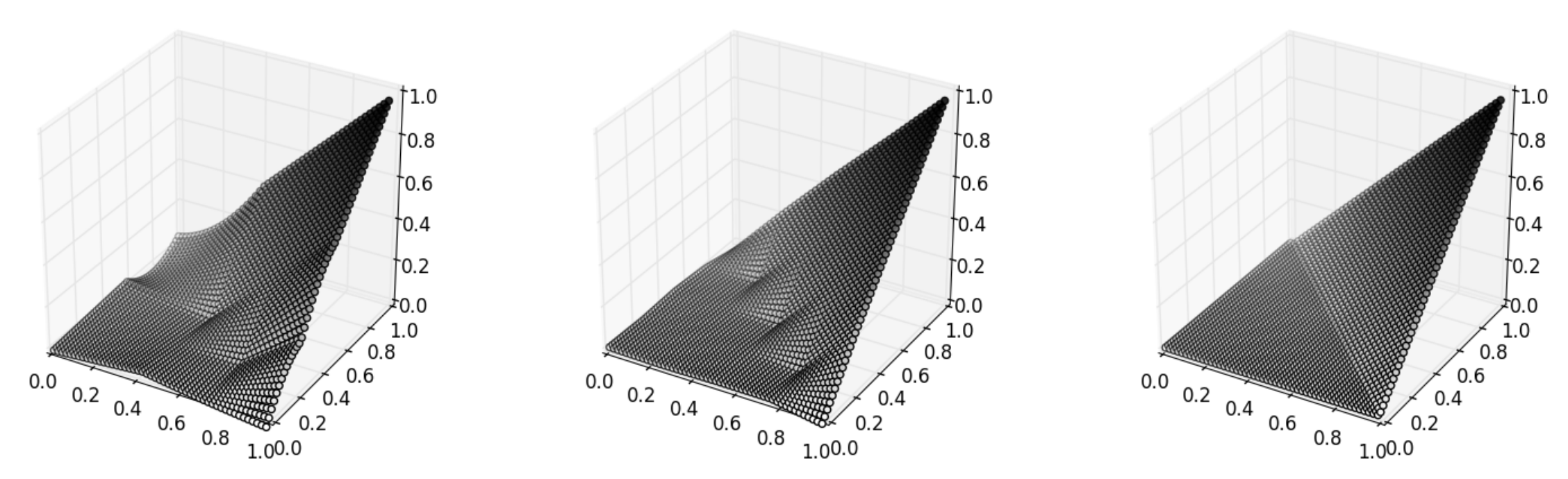}
\caption{{\it Reconstructions in model \eqref{sim5}. From left to right: Higher order MARS with $K=10,\;M=20$; Faber-Schauder reconstruction for $M=2$; neural network reconstruction for $L=1$, $\bs p=(2,1,1)$.}}\label{figure4}
\end{center}
\vspace{-0.7cm}
\end{figure}

{\bf Dependence on input dimension:} In this simulation, dependence of the risk on the input dimension is studied. We generate independent observations from
\begin{align}
\mathrm Y_i=\Big(\sum_{j=1}^d\mathrm X_{i,j}^2-\frac{d}{3}\Big)_+,\quad \bX_i=(\mathrm X_{i,1},\hdots,\mathrm X_{i,d})^\top\sim \ca U[0,1]^d,\quad i=1,\hdots,1000.\label{sim6}
\end{align}
This function is constructed in such a way that it can be well-approximated by both higher order MARS and neural networks. Independently of the dimension $d$ the setup for higher order MARS is $K=10$ and $M=30.$ We choose a densely connected neural network with architecture $L=15$, $\bs p=(d,10,\hdots,10,1)$ and Glorot uniform initializer. The prediction risks for $d=1,2,5,10$ are summarized in Table \ref{Tabelle5}. The results show that both methods suffer from the curse of dimensionality. Neural networks achieve a consistently smaller risk for all dimensions. 

\begin{table}[ht]
\begin{center}
{\small
\begin{tabular}{c|cc}
&HO-MARS & DNN \\ \hline
$d=1$&$1.47\cdot10^{-5} \ \  (3\cdot10^{-6})$&$1.33\cdot10^{-6} \ \ (4\cdot10^{-6})$\\
$d=2$
&$1.87\cdot10^{-4} \ \ (4\cdot10^{-5})$&$3.21\cdot10^{-5} \ \ (3\cdot10^{-5})$\\
$d=5$
&$8.40\cdot10^{-3} \ \ (9\cdot10^{-4})$&$1.89\cdot10^{-3} \ \ (9\cdot10^{-4})$\\
$d=10$
&$5.83\cdot10^{-2} \ \ (3\cdot10^{-3})$&$1.52\cdot10^{-2}\ \  (2\cdot10^{-3})$\\[-0.5cm]
\end{tabular}
}
\end{center}\caption{{\it Average prediction risks in model \eqref{sim6} based on 100 repetitions. Standard deviations in brackets. }}\label{Tabelle5}
\end{table}

To study the approximation performance for large input dimensions we sample independent observations  from the model
\begin{align}
\mathrm Y_i=\Big(\sum_{j=1}^{10}\mathrm X_{i,j}^2-\frac{10}{3}\Big)_+,\!\!\quad \bX_i=(\mathrm X_{i,1},\hdots,\mathrm X_{i,d})^\top\sim \ca U[0,1]^d,\!\quad i=1,\hdots,1000.\label{sim7}
\end{align}
Here, the number of active parameters 10 is fixed and small in comparison to the input dimension. For $d=1000$ the input dimension is the same as the sample size. We consider higher order MARS and neural networks with the same parameters as before. The risks are summarized in Table \ref{Tabelle6}. The obtained values are comparable with the results in the last row of Table \ref{Tabelle5} showing that both neural network and higher order MARS reconstructions are adaptive with respect to the active parameters of the model. 

\begin{table}[ht]
\begin{center}
{\small
\begin{tabular}{c|cc}
& HO-MARS & DNN \\ \hline
$d=100$&$5.81\cdot10^{-2} \ \ (3\cdot10^{-3})$&$1.32\cdot10^{-2} \ \ (3\cdot10^{-3})$\\
$d=1000$
&$5.68\cdot10^{-2} \ \ (4\cdot10^{-3})$&$3.73\cdot10^{-3} \ \ (2\cdot10^{-3})$\\[-0.5cm]
\end{tabular}
}
\end{center}\caption{{\it Average prediction risks in model \eqref{sim7} based on 100 repetitions. Standard deviations in brackets. }}\label{Tabelle6}
\end{table}

{\bf Acknowledgements.}
 We are very grateful to two referees and an associate editor for their
 constructive comments, which led to substantial improvement of  an earlier version of this manuscript.

\section{Proofs}

{\bf Proof of Lemma \ref{3}:} We first consider the approximation of one basis function $h_m$ for fixed $m\in\{1,\hdots,M\}$ by a network. By \eqref{eq.MARS_bf}, each function $h_m$ has the representation $$h_m(\bs x) = \prod_{j \in I} (s_j(x_j-t_j))_+$$ for some index set $I \subseteq \{1, \ldots, d\},$ sign vectors $s_j \in \{-1,1\}$ and shifts $\bs t = (t_j)_{j \in I} \in [0,1]^{|I|}.$ 

Now we use Lemma \ref{mmult} to construct a network $$H_{m, N}\in \ca F((N+5) \lceil \log_2 d \rceil +1, (d,d, 6d,6d, \ldots, 6d,1),s)$$ such that $\|H_{m,N}-h_m\|_\infty\leq3^d2^{-N}.$ The first hidden layer of the network $H_{m,N}$ builds the factors $(s_j(x_j-t_j))_+$ for the input $\bs x\in\R^d$. The remaining $d-|I|$ terms of the first layer are set to one. This can be achieved by choosing the first weight matrix to be
 $W_1=(s_j \mathbbm{1}(i=j, i \in I))_{i, j \in \{1,\ldots,d\}}$ and the corresponding shift vector as $\bv_1=(s_jt_j\mathbbm{1}(j\in I)- \mathbbm{1}(j \notin I))_{j },$ where $\mathbbm{1}(\cdot)$ denotes the indicator function.
 
 Because of $(s_j(x_j-t_j))_+\in [0,1]$ for $\bs x \in [0,1]^d,$ the multiplication network $\Mult_N^{d}$ introduced in Lemma \ref{mmult} can be applied to the first layer of the network. The resulting network $H_{m,N}$ is then in the class $$ \ca F((N+5) \lceil \log_2 d \rceil +1, (d,d, 6d,\ldots, 6d,1),s)$$ and satisfies $\|H_{m,N}-h_m\|_\infty\leq3^d2^{-N}.$ The number of active parameters of the network $H_{m,N}$ is $s\leq42d^2((N+5)\lceil\log_2 d\rceil+2)$.

We can simultaneously generate the networks $H_{m,N},$ $m=1,\hdots,M$, in the network space $\ca F((N+5) \lceil \log_2 d \rceil +1, (d, Md,6Md, \ldots, 6Md,M),Ms).$ Finally, we describe the construction of a network $h$ that approximates the function $f = \beta_0+ \sum_{m=1}^M \beta_m h_m$ with all coefficients bounded in absolute value by $C.$ For that we first include a layer that maps $(h_1,\hdots,h_M)^\top$ to  $(1,h_1,\hdots,h_M)^\top$ using $M+1$ active parameters. Every component is then multiplied with $C$ by parallelized sub-networks  with width 2, $2\lceil\log_2C\rceil-1$ hidden layers and $4\lceil\log_2C\rceil$ parameters each. It is straightforward to construct such a sub-network. 

For the last layer we set $W_{L+1}=C^{-1}(\beta_m)_m$. Setting $N\geq \log_2(C(M+1)3^d/\eps)$ the approximation error is 
$$\|h-f\|_\infty \leq C (M+1) 3^d2^{-N}\leq\ve.$$
The number of active parameters of $h$ is
$$M42d^2((N+5)\lceil\log_2 d\rceil+2)+2+M(4\lceil\log_2C\rceil+2).
$$
Moreover, $L=(N+5)\lceil\log_2d\rceil+ 2\lceil\log_2C\rceil+3$ and
$\max_{\ell=0,\hdots,L+1}p_\ell\leq 6Md$.
\qed

\medskip

{\bf Proofs of Theorem \ref{errm} and Theorem \ref{errfs}:} We derive an upper bound for the risk of the empirical risk minimizer using the following oracle inequality, which was proven in \cite{schmidthieber2017}. Let $\ca N(\delta,\ca F,\|\cdot\|_\infty)$ be the covering number, that is, the minimal number of $\|\cdot\|_\infty$-balls with radius $\delta$ that covers $\ca F$ (the centers do not need to be in $\ca F$).
\begin{lem}[Lemma 10 in \cite{schmidthieber2017}]\label{approx}
Consider the $d$-variate nonparametric regression model \eqref{regmod} with unknown regression function $f_0$. Let $\widehat f$ be any estimator taking values in $\ca F$ and $\{f_0\}\cup\ca F\subset\{f:[0,1]^d\rightarrow[0,F]\}$ for some $F\geq 1$. If $\Delta_n:=\Delta_n(\widehat f, f_0,\ca F)$ is as defined in \eqref{eq.relaxed_ERM} and $\ca N_n:=\ca N(\delta,\ca F,\|\cdot\|_\infty)\geq 3,$ then, for any $\delta,\gamma\in(0,1],$
\begin{align*}
R(\widehat{f},f_0)\leq(1+\gamma)^2\Big[\inf_{f\in\ca F}\E\big[\big(f(\bX)-f_0(\bX)\big)^2\big]+ \Delta_n+F^2\frac{14\log\ca N_n +20}{n\gamma}+25\delta F\Big].
\end{align*}
\end{lem}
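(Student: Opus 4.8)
The plan is to establish this as a classical oracle inequality for (approximate) empirical risk minimization under Gaussian noise, in which the only model-specific inputs are the sup-norm boundedness of all functions by $F$ and the Gaussian tails of the $\ve_i$. Write $\widehat R_n(f) = \frac1n\sum_{i=1}^n (Y_i - f(\bX_i))^2$ for the empirical risk and $\|g\|_n^2 = \frac1n\sum_i g(\bX_i)^2$ for the empirical $L^2$-seminorm. I would begin from the \emph{basic inequality}: fix an arbitrary oracle $f^\ast\in\ca F$; since $\widehat R_n(f^\ast)\ge\inf_{f\in\ca F}\widehat R_n(f)$, the definition \eqref{eq.relaxed_ERM} of $\Delta_n$ gives $\E[\widehat R_n(\widehat f)]\le\E[\widehat R_n(f^\ast)]+\Delta_n$. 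Expanding both sides with $Y_i = f_0(\bX_i)+\ve_i$, the noise-squared terms $\frac1n\sum\ve_i^2$ cancel and one is left with $\E\|\widehat f-f_0\|_n^2 \le \E\|f^\ast-f_0\|_n^2 + \Delta_n + \E\big[\tfrac2n\sum_i\ve_i(\widehat f-f^\ast)(\bX_i)\big]$. Since $\E\|f^\ast-f_0\|_n^2 = R(f^\ast,f_0)$ and the oracle is arbitrary (so that $R(f^\ast,f_0)$ may be replaced by $\inf_{f\in\ca F}R(f,f_0)$ at the end), the remaining work is to (a) transfer the empirical seminorm $\|\widehat f-f_0\|_n^2$ to the population risk $R(\widehat f,f_0)$ and (b) control the Gaussian cross term, both uniformly over the data-dependent $\widehat f\in\ca F$.

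For the uniform control, I would discretize: let $\{f_1,\dots,f_{\ca N_n}\}$ be a minimal $\delta$-net of $\ca F$ in $\|\cdot\|_\infty$, so every $f\in\ca F$ has a net point within sup-distance $\delta$. Because $f,f_0$ take values in $[0,F]$, replacing $\widehat f$ by its nearest net point perturbs each squared term by at most $O(\delta F)$; this is the source of the additive $25\delta F$. The decisive structural fact is the \emph{variance--mean relation}: writing $g_f=(f-f_0)^2\in[0,F^2]$ and $\rho_f^2 = R(f,f_0)=\E[g_f(\bX)]$, one has $\Var(g_f(\bX))\le F^2\rho_f^2$. Bernstein's inequality applied to the centered variables $\rho_{f_j}^2-\|f_j-f_0\|_n^2$, together with a union bound over the $\ca N_n$ net points, then yields a deviation of the self-normalized form $\rho_{f_j}^2-\|f_j-f_0\|_n^2 \le \gamma\rho_{f_j}^2 + cF^2\tfrac{\log\ca N_n}{n\gamma}$ on a high-probability event, which is exactly what produces a multiplicative $(1+\gamma)$ loss rather than an additive one when passing from $\|\cdot\|_n$ to $R(\cdot,f_0)$.

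The Gaussian cross term I would handle by conditioning on the design $(\bX_i)_i$. Splitting $\widehat f-f^\ast=(\widehat f-f_0)-(f^\ast-f_0)$, the part involving the fixed $f^\ast$ has zero mean and drops out, while conditionally on $(\bX_i)_i$ the quantity $\tfrac2n\sum_i\ve_i(f_j-f_0)(\bX_i)$ is centered Gaussian with variance $\tfrac4n\|f_j-f_0\|_n^2$. A maximal inequality over the net bounds its supremum by $c\,\|f_j-f_0\|_n\sqrt{\log\ca N_n/n}$, and Young's inequality $2ab\le\gamma a^2+b^2/\gamma$ splits this into a $\gamma$-fraction of $\|f_j-f_0\|_n^2$ plus a term of order $\tfrac{\log\ca N_n}{n\gamma}$. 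Collecting the three contributions --- the basic inequality, the Bernstein transfer, and the Gaussian term --- gives an inequality in which a $\gamma$-fraction of $R(\widehat f,f_0)$ appears on the right; rearranging to absorb it onto the left turns the two stacked $(1+\gamma)$ factors into the stated $(1+\gamma)^2$ and leaves precisely the additive remainder $F^2(14\log\ca N_n+20)/(n\gamma)+25\delta F$.

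The main obstacle is bookkeeping the constants through the two self-normalized steps so that they combine into exactly $14\log\ca N_n+20$ and $25\delta F$, and --- more substantively --- converting the high-probability deviation bounds into the clean expectation bound: on the complementary low-probability event the empirical seminorm can misbehave, and one must use the uniform bound $g_f\le F^2$ (hence $\|f-f_0\|_n^2\le F^2$) to show this event contributes negligibly. This is where the condition $\ca N_n\ge 3$ (ensuring $\log\ca N_n\ge\log 3$, so that stray constants are absorbed into the $\log\ca N_n$ term and the Gaussian maximal bound on $\E[\max_j|\tfrac1n\sum_i\ve_i(f_j-f_0)(\bX_i)|]$ applies) enters.
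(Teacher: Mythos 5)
Your proposal is correct and follows essentially the same route as the actual proof of this result, which the paper does not reprove but imports verbatim as Lemma 10 of \cite{schmidthieber2017}: the relaxed-ERM basic inequality via $\Delta_n$, a $\delta$-net in $\|\cdot\|_\infty$ (source of the $25\delta F$ term), the variance--mean relation $\Var((f-f_0)^2(\bX))\leq F^2 R(f,f_0)$ fed into Bernstein with a union bound, and the conditionally Gaussian, self-normalized maximal inequality for the cross term split by $2ab\leq \gamma a^2+b^2/\gamma$, with the two absorption steps stacking into $(1+\gamma)^2$. Apart from constant bookkeeping (which you correctly flag as routine) this matches the cited proof, so there is nothing substantive to add.
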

Using the Lipschitz-continuity of the ReLU function, the covering number of a network class can be bounded as follows.
\begin{lem}[Lemma 12 in \cite{schmidthieber2017}]\label{cov}
If $V:=\prod_{\ell=0}^{L+1}(p_\ell+1)$, then, for any $\delta>0$,
\begin{align*}
\log\ca N\Big(\delta,\ca F(L,\bs p,s),\|\cdot\|_\infty\Big)\leq(s+1)\log\Big(2\delta^{-1}(L+1)V^2\Big).
\end{align*}
\end{lem}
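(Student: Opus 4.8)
\emph{Proof proposal.} The plan is to cover $\ca F(L,\bs p,s)$ by discretizing the network parameters on a fine grid and controlling how much the network function can change under a small perturbation of its parameters. Write a network \eqref{eq.DNN_def} as $f_{\btheta}$, where $\btheta$ collects all entries of the weight matrices $W_1,\ldots,W_{L+1}$ together with all shift vectors $\bs v_1,\ldots,\bs v_L$; the ambient number of parameters is $T:=\sum_{\ell=1}^{L+1}p_\ell p_{\ell-1}+\sum_{\ell=1}^{L}p_\ell$, and for $f_{\btheta}\in\ca F(L,\bs p,s)$ we have $\btheta\in[-1,1]^T$ with at most $s$ nonzero entries. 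A useful elementary fact, used twice below, is that $T\le V$: expanding the product $V=\prod_{\ell=0}^{L+1}(p_\ell+1)$ into a sum over subsets of $\{0,\ldots,L+1\}$ produces, among its nonnegative terms, each adjacent product $p_{\ell-1}p_\ell$ and each singleton $p_\ell$ exactly once, so the full product dominates their sum $T$.

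The central step is a Lipschitz bound of the form
\begin{align*}
\|f_{\btheta}-f_{\btheta'}\|_\infty\le (L+1)\,V\,\|\btheta-\btheta'\|_\infty,\qquad \btheta,\btheta'\in[-1,1]^T.
\end{align*}
I would prove this by the standard telescoping argument in which one replaces the parameters of $f_{\btheta}$ by those of $f_{\btheta'}$ one layer at a time. Two ingredients make each of the $L+1$ resulting terms small: first, since all parameters are bounded by one and $\bs x\in[0,1]^d$, the output of the $\ell$-th layer has sup-norm at most $\prod_{k=0}^{\ell-1}(p_k+1)$, which follows by induction over the layers; second, the map given by layers $\ell+1,\ldots,L+1$ is Lipschitz in the sup-norm with constant at most $\prod_{k=\ell+1}^{L+1}p_{k-1}$, because each weight matrix has $\ell^\infty$-operator norm at most $p_{k-1}$ and the ReLU is $1$-Lipschitz. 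Combining the input bound at layer $\ell$ with this tail-Lipschitz constant, each swapped layer contributes at most $\|\btheta-\btheta'\|_\infty\prod_{k=0}^{L}(p_k+1)\le \|\btheta-\btheta'\|_\infty V$, and summing over the $L+1$ layers gives the claim. This perturbation estimate is the main obstacle, since it is where the architecture-dependent constant $(L+1)V$ must be tracked carefully through the composition; the rest of the argument is bookkeeping.

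With the Lipschitz bound in hand, the covering is routine. Set $\eta:=\delta/((L+1)V)$ and let $\ca G\subset[-1,1]$ be an $\eta$-net of $[-1,1]$ containing $0$, with $|\ca G|\le 2\delta^{-1}(L+1)V$ grid points. Given $f_{\btheta}\in\ca F(L,\bs p,s)$, round each active coordinate of $\btheta$ to the nearest point of $\ca G$ and leave the inactive coordinates at $0$; the rounded parameter $\btheta'$ again has at most $s$ nonzero entries and satisfies $\|\btheta-\btheta'\|_\infty\le\eta$, whence $\|f_{\btheta}-f_{\btheta'}\|_\infty\le (L+1)V\eta=\delta$. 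Thus the finitely many networks $f_{\btheta'}$ obtained in this way form a $\delta$-cover.

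It remains to count these grid networks. Such a network is determined by a choice of at most $s$ active coordinates among $T$ together with a value in $\ca G$ for each, so their number is at most $\sum_{k=0}^{s}\binom{T}{k}|\ca G|^k\le\sum_{k=0}^{s}(T|\ca G|)^k\le (s+1)(T|\ca G|)^s$, using $\binom{T}{k}\le T^k$ and $T|\ca G|\ge 1$. Writing $B:=2\delta^{-1}(L+1)V^2$, the bounds $T\le V$ and $|\ca G|\le 2\delta^{-1}(L+1)V$ give $T|\ca G|\le B$, and since $s\le T\le V$ one checks $s+1\le B$ in the relevant range $\delta\le 1$. Therefore $\ca N(\delta,\ca F(L,\bs p,s),\|\cdot\|_\infty)\le (s+1)B^s\le B^{s+1}$, and taking logarithms yields $\log\ca N\le (s+1)\log\big(2\delta^{-1}(L+1)V^2\big)$, as claimed.
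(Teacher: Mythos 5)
Your proof is correct and follows essentially the same route as the paper's source: the paper does not reprove this lemma but imports it as Lemma 12 of \cite{schmidthieber2017}, whose argument is exactly your scheme of a parameter-perturbation Lipschitz bound $\|f_{\btheta}-f_{\btheta'}\|_\infty\leq(L+1)V\|\btheta-\btheta'\|_\infty$ (telescoping layer by layer, with the forward bound $\prod_{k=0}^{\ell-1}(p_k+1)$ on hidden-layer outputs and the tail Lipschitz constant from the $\ell^\infty$-operator norms), followed by rounding the at most $s$ active parameters to a $\delta/((L+1)V)$-grid and counting sparsity patterns via $\sum_{k\leq s}\binom{T}{k}|\ca G|^k$. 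Your explicit restriction to the range $\delta\leq 1$ when verifying $s+1\leq 2\delta^{-1}(L+1)V^2$ is a harmless, if anything slightly more careful, caveat, since the paper only invokes the bound with $\delta=1/n$.
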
 

\begin{proof}[Proof of Theorem \ref{errm}]
Given $(M,C)$ with $C$ fixed, there exists by Lemma \ref{3} a network class $\ca F (L, \bs p, s)$ such that $L=O(\log Mn),$ $\max_{\ell} p_\ell = O(M),$ $s=O(M\log Mn)$ and such that $\inf_{h\in \ca F(L, \bs p, s)} \| h-f\|_\infty \leq 1/n.$ We will show that for this class, \eqref{maap} holds. 

We bound the metric entropy using Lemma \ref{cov}. For two positive sequences $\lesssim$ means that the right hand side is bounded by the left hand side up to a multiplicative constant. Observe that 
\begin{align*}
\log V=\log\Big(\prod_{\ell=0}^{L+1}(p_\ell+1)\Big)\lesssim L\log\Big(\max_{\ell=0,\hdots,L+1}p_\ell\Big)\lesssim L\log (M+1)
\end{align*}
yields the upper bound
\begin{align*}
	\log\ca N\Big(\frac 1n,\ca F(L,\bs p,s),\|\cdot\|_\infty\Big)
	&\lesssim M \log (Mn) \Big( \log(4nL)+ L\log (M+1) \Big) \\
	&\lesssim M\log^2 (Mn)\log (M+1).
\end{align*}
Lemma \ref{approx} and an application of Lemma \ref{3} finally yield the assertion
\begin{align*}
&R(\widehat{f},f_0)\\
&\leq (1+\gamma)^2\Big(\inf_{f\in \ca F(L,\bs p,s)}\E\big[\big(f(\bX)-f_0(\bX)\big)^2\big]+\Delta_n\Big)+O\Big(\frac{M\log^2 (Mn)\log (M+1)}{n\gamma}\Big)\\
&\leq(1+\gamma)^2\Big(\inf_{f\in\Fm(M,C)}\E\big[\big(f(\bX)-f_0(\bX)\big)^2\big]+\Delta_n\Big)+O\Big(\frac{M\log^2 (Mn)\log (M+1)}{n\gamma}\Big).
\end{align*}
\end{proof}

\begin{proof}[Proof of Theorem \ref{errfs}]
By replacing $M$ and ${I}$ in the bounds, the proof can be derived in the same way as the proof of Theorem \ref{errm}.
\end{proof}

\bibliographystyle{acm}
\bibliography{lit}
\end{document}